
\documentclass{article}

\usepackage{microtype}
\usepackage{graphicx}
\usepackage{booktabs} 
\usepackage{mathextra}
\usepackage{braket}
\usepackage{subcaption}
\usepackage{hyperref}
\usepackage{hyperref}



\usepackage[accepted]{icml2019_style/icml2019}

\icmltitlerunning{Adaptive and Safe Bayesian Optimization in High Dimensions via One-Dimensional Subspaces}

\begin{document}

\twocolumn[
\icmltitle{Adaptive and Safe Bayesian Optimization in High Dimensions via One-Dimensional Subspaces}



\icmlsetsymbol{equal}{*}

\begin{icmlauthorlist}
\icmlauthor{Johannes Kirschner}{eth}
\icmlauthor{Mojm\'ir Mutn\'y}{eth}
\icmlauthor{Nicole Hiller}{psi}
\icmlauthor{Rasmus Ischebeck}{psi}
\icmlauthor{Andreas Krause}{eth}
\end{icmlauthorlist}

\icmlaffiliation{eth}{
	Department of Computer Science,
	ETH Zurich, Switzerland\linebreak
}
\icmlaffiliation{psi}{
	Paul Scherrer Institut,
	Switzerland}
\icmlcorrespondingauthor{\linebreak Johannes Kirschner}{jkirschner@inf.ethz.ch}

\icmlkeywords{Bayesian Optimization, black box optimization, Gaussian processes}

\vskip 0.3in
]



\printAffiliationsAndNotice{}  


\begin{abstract}
Bayesian optimization is known to be difficult to scale to high dimensions, because the acquisition step requires solving a non-convex optimization problem in the same search space. In order to scale the method and keep its benefits, we propose an algorithm (\textsc{LineBO}) that restricts the problem to a sequence of iteratively chosen one-dimensional sub-problems that can be solved efficiently. We show that our algorithm converges globally and obtains a fast local rate when the function is strongly convex. Further, if the objective has an invariant subspace, our method automatically adapts to the effective dimension without changing the algorithm. 
When combined with the \textsc{SafeOpt} algorithm to solve the sub-problems, we obtain the first safe Bayesian optimization algorithm with theoretical guarantees applicable in high-dimensional settings. We evaluate our method on multiple synthetic benchmarks, where we obtain competitive performance. Further, we deploy our algorithm to optimize the beam intensity of the Swiss Free Electron Laser with up to 40 parameters while satisfying safe operation constraints.
\end{abstract}

\section{Introduction}

	Zero-order stochastic optimization problems arise in many applications such as hyper-parameter tuning of machine learning models, reinforcement-learning and industrial processes. An example that motivates the present work is parameter tuning of a free electron laser (FEL). FELs are large-scale physical machines that accelerate electrons in order to generate bright and shortly pulsed X-ray lasing. The X-ray pulses then facilitate many experiments in biology, medicine and material science. The accelerator and the electron beam line of a free electron laser consist of multiple individual components, each of which has several parameters that experts adjust to maximize the pulse energy. Because of different operational modes and parameter drift, this is a recurrent, time-consuming task which takes away valuable time for experiments. As a single measurement can be obtained in less than one second, the task is well suited for automated optimization with a continuous search space of about 10-100 parameters. Further, some parameters are known to physically over-parametrize the objective function, which leads to invariant subspaces and also local optima. Additionally, some settings can cause electron losses, which are required to stay below a pre-defined threshold.

	\begin{figure}[t]
			\begin{subfigure}[t]{0.15\textwidth} \vskip 0pt
			\includegraphics[height=90px, trim={0cm 0 40px 0},clip]{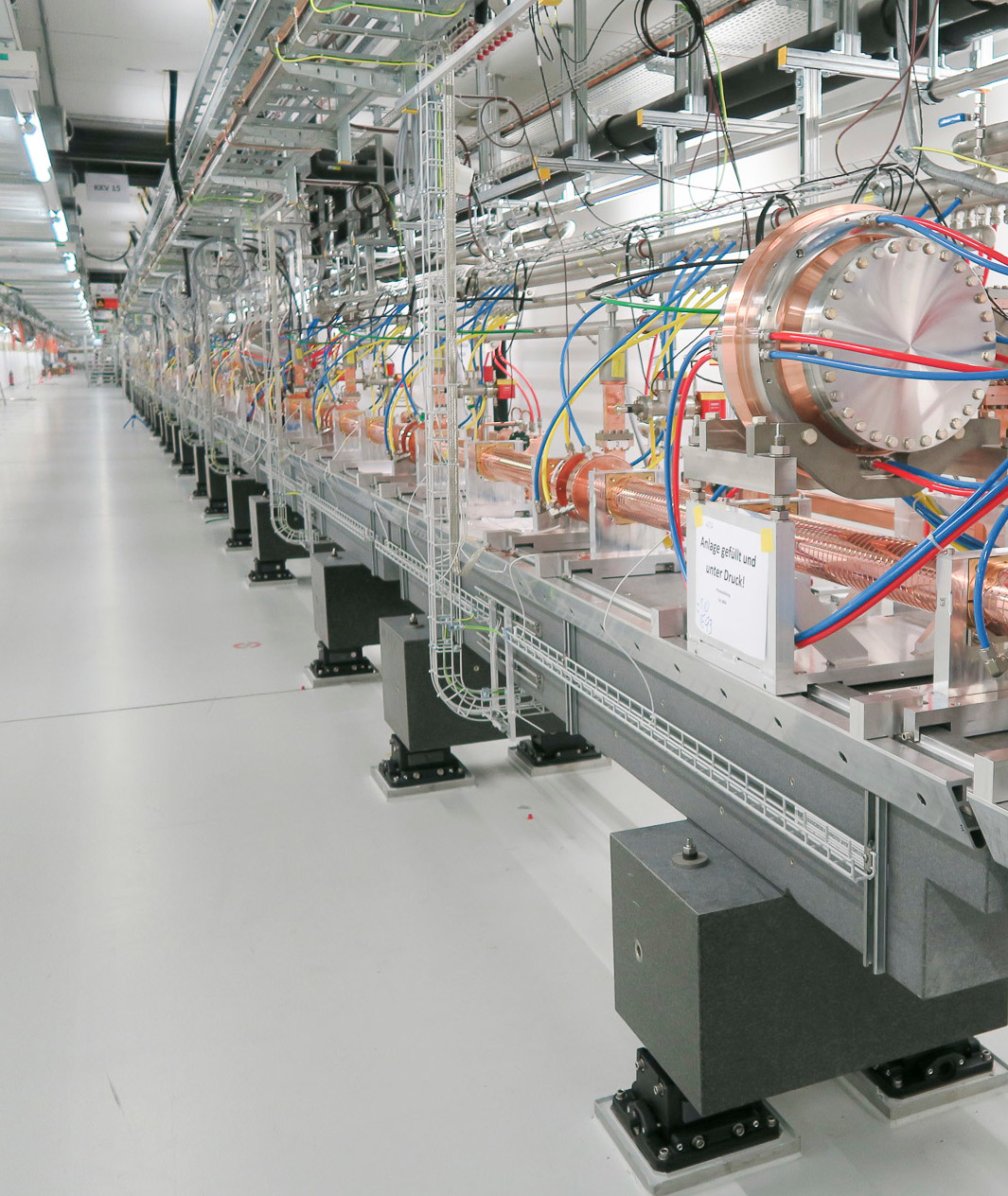}
		\end{subfigure}\hfill
		\begin{subfigure}[t]{0.3\textwidth}\vskip 0pt
			\includegraphics[trim={0 2px 0 1px},clip]{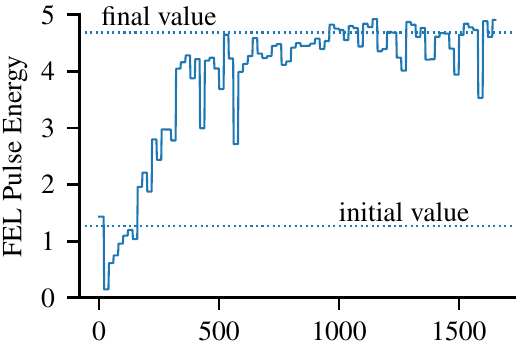}
		\end{subfigure}\hspace{5px}
		\caption{Left: Inside a free electron laser tunnel. Right: Using \textsc{LineBO} to tune the SwissFEL pulse energy with 40 parameters.}
		\label{fig:swissfel40d}
	\end{figure}
	
	This scenario can be cast as a gradient-free stochastic optimization problem with implicit constraints. The fact that the constraints are safety critical rules out many commonly used algorithms. Arguably, the simplest approach is to use a local optimization method with a conservatively chosen step size and a term that penalizes constraint violations in the objective, but such a method might get stuck in local optima. As an alternative, Bayesian optimization offers a principled, global optimization routine that can also operate under safety constraints \citep{Sui2015}. When applied to a low-dimensional subset of parameters, Bayesian optimization has been successfully used on FELs and in similar applications. However, it is well known that standard Bayesian optimization is difficult to scale to high-dimensional settings, because optimizing the acquisition function becomes itself an intractable optimization problem.
	
	In this work, we propose a novel way of using Bayesian optimization that is computationally feasible \emph{even in high dimensions}. The key idea is to iteratively solve sub-problems of the global problem, each of which can be solved efficiently, both computationally and statistically. As feasible sub-problems we choose one-dimensional subspaces of the domain that contain the best point so far. On a one-dimensional domain, Bayesian optimization can be implemented computationally efficiently and the sample-complexity to obtain an $\epsilon$-optimal point is independent of the outer dimension. A global GP model can nevertheless be used and allows to \emph{share information} between the sub-problem to increase data-efficiency, in particular as samples start to accumulate close to an optimum. As we will show, our approach obtains \emph{both local and global convergence guarantees} and further adaptively scales with the \emph{effective dimension}, if the objective contains an invariant subspace. In the constraint setting, we use \textsc{SafeOpt} to solve the sub-problems. This way, we obtain the first principled and safe Bayesian optimization algorithm applicable to high-dimensional domains.

	\subsection{Contributions}
\begin{itemize}	
		\item We propose a novel way of using Bayesian optimization that circumvents the issue of acquisition function optimization by decomposing the global problem into a sequence of one-dimensional sub-problems that can be solved efficiently. 
		\item Theoretically, we show that if the one-dimensional subspaces are chosen randomly, the algorithm converges with \emph{a fast local rate} where the function is strongly convex, and converges \emph{globally at a Lipschitz rate} that adaptively scales with the effective dimension.
		\item To respect safety constraints during optimization, each sub-problem can be solved with \textsc{SafeOpt}. To the best of our knowledge, this is the \emph{first principled algorithm for high dimensional safe Bayesian optimization}.
		\item Our algorithm is practical and amenable to heuristics that improve local convergence. As user feedback we provide one-dimensional slice plots that allow to monitor the progress and the model fit.
		\item We evaluate our method on synthetic benchmark functions, and apply it to tune the Swiss Free Electron Laser (SwissFEL) with up to 40 parameters on a continuous domain, satisfying safe operation constraints.
	\end{itemize}
				
	\subsection{Related Work}
		Derivative-free stochastic optimization covers an array of algorithms from the very general grid-based methods \cite{Nesterov2010, Jones2001} to local methods, where most of the work is spent on approximating the gradient \cite{Nesterov2017}. Especially of interest are algorithms that optimize functions with a noisy oracle, also known as stochastic bandit feedback \cite{Flaxman2004,Shamir2013}. Popular examples include CMA-ES \cite{Hansen2001, Hansen2003}, Nelder-Mead \cite{Powell1973} and SPSA \cite{Bhatnagar2013}. Line-search techniques are related to our method, but have been primarily studied in the context of convex optimization \cite{Gratton2015}, also with stochastic models and search directions \cite{Cartis2018, Paquette2018, Diniz-Ehrhardt2008}.

Bayesian optimization is a family of algorithms using probabilistic models to determine which point to evaluate next \cite{Mockus1982, Shahriari2016}. Many variants appeared in literature; including GP-UCB \cite{Srinivas2009}, Thompson Sampling \cite{Chowdhury2017}, and Expected Improvement \cite{Mockus1982}; and recently with information theoretic criteria such as MVES or IDS \cite{Wang2017,Kirschner2018}. Lower bounds are known as well \cite{Scarlett2017}. Bayesian optimization on a one dimensional domain is not necessarily thought of as line search, although it can be used as such \cite{Mahsereci2015}, and the one dimensional setting is theoretically well understood \cite{Scarlett2018}. Success stories, where Bayesian optimization outperforms classical techniques, include applications in laser technology \cite{Schneider2018}, performance optimization of Free Electron Lasers \cite{McIntire2016, McIntire2016b} and parameter optimization in the CPLEX suite \cite{Shahriari2016}.

The scaling of Bayesian optimization to high dimensions has been considered recently, as many of the commonly used kernels suffer from the curse of dimensionality. Hence, to make the problem tractable, most approaches make structural assumptions on the function such as additivity \cite{Rolland2018, Mutny2018} or a low-dimensional active subspace \cite{Djolonga2013}. The latter category includes REMBO \cite{Wang2016}, which also optimizes on a random low-dimensional subspace, however, in contrast to our method, the dimension of the low-dimensional embedding needs to be known a priori. An iterative procedure to define the subspaces is proposed by \citet{qian2016derivative} and similarly our method relates to the Dropout-BO algorithm of \citet{Li2017}, but in both cases the convergence analysis is incomplete.  A heuristic that combines local optimization with Bayesian optimization was proposed by \citet{McLeod2018}.

The main instance of safe Bayesian optimization is the \textsc{SafeOpt} algorithm \cite{Sui2015,Berkenkamp16, Sui2018}; but its formulation relies on a discretized domain, which prevents high-dimensional applications. An adaptive discretization based on particle swarms was proposed by \citet{Berkenkamp2016A}.

	\section{Problem statement}
		Let $\xX \subset \RR^d$ be a compact domain and $f : \xX \rightarrow \RR$ the objective function we seek to minimize\footnote{Bayesian optimization is typically formulated as maximization problem, but since we also have results in the flavor of convex optimization,  w.l.o.g.\ we use minimization here.}, 
		\begin{align}
			\min_{x\in \xX} f(x) \quad \text{ s.t. } \quad g(x) \leq 0 \text{,}
		\end{align}
		where we allow for implicit constraints $g:\xX \rightarrow \RR$. The constraint function can be chosen vector valued in the case of multiple constraints. We refer to such constraints as \emph{safety constraints} if it is required that the iterates $x_t$ satisfy $g(x_t) \leq 0$ during optimization. We assume that $f$ and $g$ can only be accessed via a noisy oracle, that given a point $x \in \xX$ returns an evaluation $y = f(x) + \epsilon$ and $s = g(x) + \epsilon'$, where $\epsilon$ is a noise term with sub-Gaussian tails.
		
		Denote $f^* = \min_{x\in \xX} f(x)$ and let $x^* \in \xX$ be a point such that $f(x^*) = f^*$. An optimization algorithm iteratively picks a sequence of evaluations $x_1,\dots, x_T$, and obtains the corresponding noisy observations $y_1,\dots, y_T$. As a measure of progress we use \emph{simple regret}. At any stopping time $T$, the optimization algorithm proposes a candidate solution $\hat{x}_T$. This point is allowed to differ from the point $x_T$ that is chosen for the purpose of optimization, still, some algorithms might set $\hat{x}_T = x_T$. Simple regret is defined as
		\begin{align}
			r_T := f(\hat{x}_T) - f^* \text{,}
		\end{align}
		and therefore measures the ability of an optimization algorithm to predict a minimizer at time $T$. To impose some regularity on $f$, we make the following assumption.
		
		\begin{assumption}[RKHS]\label{ass:f_regular}
			 The objective and constraint functions $f$ and $g$ are members of reproducing kernel Hilbert spaces $\hH(k_1)$, $\hH(k_2)$ with known kernel functions $k_1,k_2:\xX \times \xX \rightarrow \RR$ and bounded norm $\|f\|_{\hH_1},\|g\|_{\hH_2} \leq B$.
		 \end{assumption}
		
		 This assumption is central for Bayesian optimization, as it justifies the use of Gaussian processes to estimate $f$ from the samples \cite{Rassmussen2004,Kanagawa2018}.

\section{Line Bayesian Optimization}	
\begin{algorithm}[t]
	\caption{Line Bayesian Optimization (\textsc{LineBO})}\label{alg:linebo}
	\begin{algorithmic}[1]
		\REQUIRE Direction oracle $\Pi$, accuracy $\epsilon$, starting point $\hat{x}_0$, \\
		Model $\mM_0 = (\text{GP prior for } f,g)$
		\FOR{$i=1,2, \dots, K$}
		\STATE $l_i \gets \Pi(\mM_{i-1})$ \hfill \textit{// define direction}
		\STATE $\lL_i \gets \lL(\hat{x}_{i-1}, l_i)$ \hfill \textit{// define subspace}
		\STATE $\hat{x}_{i}, \mM_{i} \gets \text{BayesianOptimization}(\mM_{i-1}, \lL_{i}, \epsilon)$ \\ \hfill \textit{// includes posterior updates (Appendix \ref{app:BO})}
		\ENDFOR
	\end{algorithmic} 
\end{algorithm}

In its standard formulation, Bayesian optimization uses a Gaussian process prior GP$(\mu, k)$ with mean $\mu : \xX \rightarrow \RR$ and kernel function $k : \xX \times \xX \rightarrow \RR$ and Bayes' rule to update the posterior as observations $(x_t,y_t)$ arrive. If a Gaussian likelihood $\epsilon \sim \nN(0, \sigma^2)$ is used, the posterior mean $\hat{f}_t$ can be computed analytically and is equivalent to the regularized least squares kernel estimator,
\begin{align*}
\hat{f}_t(x) := \argmin_{f \in \hH_k} \sum_{t=1}^T \big(f(x_t) - y_t\big)^2 + \|f\|_{\hH_k}^2 \text{ .}
\end{align*}

From the Bayesian posterior, one can obtain credible intervals $\hat{f}_t(x) \pm \beta_t \sigma_t(x)$, which in this case are known to match frequentist confidence intervals up to the scaling factor $\beta_t$. Bayesian optimization is built upon using the uncertainty estimates $\sigma_t$, or more generally the posterior distribution, to determine promising query points $x_t$ that efficiently reduce the uncertainty about the true maximizer $x^*$. Typically, an acquisition function $\alpha_t(x) := \alpha(x|\hat{f}_t, \sigma_t)  : \xX \rightarrow \RR$ is defined to trade-off between exploration and exploitation on the GP posterior landscape and evaluations are chosen as $x_t \in \arg\max_{x \in \xX} \alpha_t(x)$. Commonly used acquisition functions include UCB, Thompson Sampling, Expected Improvement and Max-Value Entropy Search.

The success of Bayesian optimization crucially relies on the ability to find a maximizer of the acquisition function $\alpha_t$, which requires solving a non-convex optimization problem in the same search space $\xX$. In most of the literature on Bayesian optimization, this is not discussed further as the computational cost of solving $\arg\max \alpha_t(x)$ is assumed to be negligible compared to obtaining a new evaluation on the oracle. \emph{In practice, however, this step renders the method intractable in high-dimensional settings.} 

In order to maintain tractability of the acquisition step in high dimensions, we propose to restrict the search space to a one-dimensional\footnote{Generalization to higher dimensional subspaces is possible.} affine subspace $\lL(x,l) := \{ x + \alpha l : \alpha \in \RR \}\cap \xX$, where $x \in \xX$ is the offset, and $l \in \RR^d$ is the direction. On such a restriction, the acquisition step can be effectively solved using an (adaptive) grid-search over $\lL$. We will show that by carefully choosing a sequence $\lL_1,\dots, \lL_K$ of one-dimensional subspaces, we obtain a method that still converges globally and additionally has properties similar to a gradient method. By using a global GP model, we can share information between the sub-solvers and handle noise in a principled way.

The \textsc{LineBO} method is presented in Algorithm \ref{alg:linebo}. As standard for Bayesian optimization, we initialize with a GP prior. We also assume that the user provides a \emph{direction oracle} $\Pi$, which is used to iteratively define subspaces $\lL_i=\lL(x_i, l_i)$. The affine subspace is always chosen to contain the previous best point to ensure a monotonic improvement over $K$ iterations. We then proceed by efficiently solving the subspace $\lL_i$ using standard Bayesian optimization (Appendix \ref{app:BO}).

A canonical example of the direction oracle is to pick the direction uniformly at random, which is also the main focus of our analysis. As we will see, this algorithm obtains both a local and a global convergence rate. Another possibility is to use (random) coordinate aligned directions, which resembles a coordinate descent algorithm. In this case, our method is a special case of DropoutUCB of \citet{Li2017}, but the global rate they obtained has a non-vanishing gap in the limit and local convergence was not analysed.

\subsection{Safe Line Bayesian Optimization}

The restriction of the search space allows us to effectively use a safe Bayesian optimization algorithm like \textsc{SafeOpt} (see Appendix \ref{app:safeopt}) as a sub-solver, which in turn renders the global method safe (\textsc{SafeLineBO}). We note that in its current formulation, \textsc{SafeOpt} crucially relies on a discretized domain, which makes it difficult to apply even with $d > 3$; but it is an easy task to implement the method on a one dimensional domain. To the best of our knowledge, this way we obtain the first principled method for safe Bayesian optimization in high dimensions.

\section{Convergence Analysis}

\subsection{Sample Complexity of 1D Bayesian Optimization}

To understand the sample complexity of solving the one dimensional sub-problems, we rely on the standard analysis of Bayesian optimization developed by \citet{Srinivas2009,AbbasiYadkori2012,Chowdhury2017}. The results are often stated in terms of a complexity measure called \emph{maximum information gain} $\gamma_T$, which is defined as the mutual information $\gamma_T := \max_{A \subset \xX: |A|=T} I(y_A, f_A)$. This quantity depends on the kernel and upper bounds are known for the RBF and Matern kernel \cite{Seeger2008, Srinivas2009}. We focus on a subset of kernels, which when restricted on the one dimensional affine subspace $\lL$, their $\gamma_T(k|_\lL)$ satisfies the following assumption. 

\begin{assumption}[Bounded $\gamma_T$] \label{ass:bounded_gamma}
	Let $k:\RR\times \RR \rightarrow \RR^+ $ be a one-dimensional kernel and $\kappa \in (0,0.5)$, then 
	\begin{align*}
	\gamma_T(k) \leq \oO ( T^\kappa \log T).
	\end{align*}	
\end{assumption}

This is satisfied for the squared exponential kernel ($\kappa = 0$) and the Matern kernel with $\nu > \frac{3}{2}$ ($\kappa = \frac{2}{2v+2}$). Simple regret can be bounded as $r_T \leq \oO(\gamma_T / \sqrt{T})$, and with the assumption above, the bound becomes
$r_T \leq \oO(T^{\kappa - 1/2})$ up to logarithmic factors (see also Appendix \ref{app:sample_complexity_bo}). Equivalently, the time until $\epsilon$ regret is guaranteed is $T \leq \oO(\epsilon^{- \frac{2}{1-2\kappa}})$. The best known lower bound for this case is $r_T \geq \Omega(\epsilon^{-\frac{2}{1-\kappa}})$ \cite{Scarlett2017}, hence almost closes the gap. The overall number of evaluations after $K$ iterations of Algorithm \ref{alg:linebo} is at most $\oO(K \epsilon^{- \frac{2}{1-2\kappa}})$.

\subsection{Global Convergence and Subspace Adaptation}

In practice, we often encounter functions that are high-dimensional but contain an (unknown) invariant subspace. This means that there are directions in which the function is constant and after removal of these dimensions the problem might not be high dimensional (see Figure \ref{fig:subspace}). The dimension of the linear space where the function varies is called \emph{effective dimension}, as formalized in the following definition.

\begin{definition}[Effective dimension]\label{def:effective_dim} The effective dimensionality of a function $f:\RR^d \rightarrow \RR$ is the smallest $d_e \leq d $ s.t. there exists a linear subspace $\yY \subset \RR^d$ of dimension $d_e$ and for all   $x_\top \in \yY$ and $x_\perp \in \yY_\perp$, where $\yY_\perp$ is the orthogonal complement of $\yY$, $f(x_\top \oplus x_\perp) = f(x_\top \oplus 0)$.
\end{definition} 

\begin{figure}
\centering
\begin{subfigure}[b]{0.22\textwidth}
	\includegraphics[height=80px,trim={0cm 0 0px 0},clip]{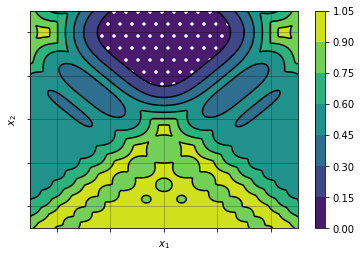}
\end{subfigure}\hspace{5px}
\begin{subfigure}[b]{0.22\textwidth}
	\includegraphics[height=80px]{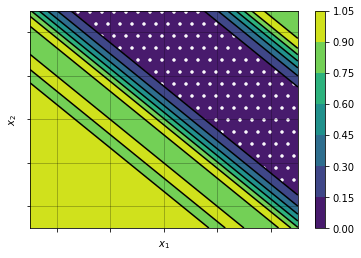}
\end{subfigure}
\caption{Function with $d_e = 2$ and $d_e = 1$. The volume of the set $V_\epsilon = \{ x | f(x) - f(x^*) \leq \epsilon \}$  (dotted region) for $d_e = 2$ and $d_e = 1$ can be significantly larger if the function contains an invariant subspace, which facilitates random exploration.}
\label{fig:subspace}
\end{figure}

If Algorithm \ref{alg:linebo} is used with randomly chosen directions, we show that the convergence of the algorithm adaptively scales with the effective dimension $d_e$. The result is quantified in the following proposition.

\begin{proposition}[Global convergence]\label{prop:global}
Let $f$ satisfy Assumption \ref{ass:f_regular} with effective dimension $d_e$, $k$ be twice differentiable, and let $\delta \in (0,1)$. Then after $K$ iterations of Algorithm \ref{alg:linebo} with accuracy $\epsilon$ and directions chosen uniformly at random, with probability at least $1-\delta$, it holds that
\begin{align*}
f(\hat{x}_K) - f^* \leq \oO\left( \left( \frac{1}{K} \log\left(\frac{1}{\delta}\right)  \right)^{\frac{2}{d_e -1 }} + \epsilon\right) \text{ .}
\end{align*}
\end{proposition}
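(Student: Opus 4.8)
The plan is to exploit the only structural feature available, namely that $f$ depends on $x$ only through its projection $x_\top$ onto the $d_e$-dimensional subspace $\yY$ of Definition~\ref{def:effective_dim}. The key observation is that if $l$ is drawn uniformly on $S^{d-1}$, then its normalized projection $l_\top/\|l_\top\|$ is uniform on $S^{d_e-1}$, so a random search direction ``sees'' only the effective dimension — this is what forces $d_e$ rather than $d$ into the rate. Two regularity facts are used. (i) Since $k$ is twice differentiable and $\|f\|_{\hH_1}\le B$, the derivative reproducing property bounds the second derivatives of $f$ uniformly on the compact set $\xX$, so $f\in C^2$ with $\|\nabla^2 f\|_{\mathrm{op}}\le M$ for some $M=M(B,k)$; assuming (as is standard in this line of work) that $x^*$ lies in the interior of $\xX$, so $\nabla f(x^*)=0$, a second-order expansion together with $f(x)=f(x_\top\oplus 0)$ gives the local quadratic upper bound $f(x)-f^*\le\tfrac{M}{2}\|(x-x^*)_\top\|^2$ for $x$ near $x^*$. (ii) We may take $x^*\in\yY$, and $D:=\mathrm{diam}(\xX)$ bounds $\|(\hat x_i-x^*)_\top\|\le\|\hat x_i-x^*\|\le D$ for every incumbent.

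Next I would estimate, for a target accuracy $\eta>0$ to be optimized later, the probability that a single iteration makes progress to level $\eta$. Put $\rho:=\sqrt{2\eta/M}$ and call iteration $i$ \emph{successful} if the line $\lL_i=\lL(\hat x_{i-1},l_i)$ contains a point $x'$ with $\|(x'-x^*)_\top\|\le\rho$; by (i) this forces $\min_{x\in\lL_i}f(x)\le f^*+\eta$, hence the $\epsilon$-accurate sub-solver returns $\hat x_i$ with $f(\hat x_i)\le f^*+\eta+\epsilon$. To lower bound the success probability conditioned on $\hat x_{i-1}$: the distance from $x^*_\top$ to the $\yY$-line through $(\hat x_{i-1})_\top$ with direction $(l_i)_\top$ equals $\|(\hat x_{i-1}-x^*)_\top\|\sin\theta_i\le D\sin\theta_i$, where $\theta_i$ is the angle between $(l_i)_\top$ and $(\hat x_{i-1}-x^*)_\top$ inside $\yY$. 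Since $(l_i)_\top/\|(l_i)_\top\|$ is uniform on $S^{d_e-1}$, the event $\sin\theta_i\le\rho/D$ — which implies success — has probability at least $c_{d_e}(\rho/D)^{d_e-1}$ by the standard spherical-cap bound (valid once $\rho\le D$, i.e.\ $\eta$ small). Substituting $\rho=\sqrt{2\eta/M}$, the per-step success probability is at least $p:=c\,\eta^{(d_e-1)/2}$ with $c=c(d_e,M,D)$. One technical point to address here is that $x'$ must lie in $\xX$, not merely on the ambient line; this is clear when the minimizer of $f$ restricted to that $\yY$-line is attained in the interior of $\xX$, and boundary corner cases only affect constants, not the rate.

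Finally I would amplify over the $K$ independent direction draws. Since the $l_i$ are independent, $\Pr[\text{no successful iteration among } i\le K]\le(1-p)^K\le e^{-pK}$. Because each new subspace contains the current incumbent and the algorithm never replaces the incumbent by a worse point, the sequence $f(\hat x_i)$ is non-increasing; hence a single successful iteration $j\le K$ already yields $f(\hat x_K)\le f(\hat x_j)\le f^*+\eta+\epsilon$. Imposing $e^{-pK}\le\delta$ is equivalent to $p\ge\tfrac1K\log\tfrac1\delta$, i.e.\ $\eta\ge\big(\tfrac1{cK}\log\tfrac1\delta\big)^{2/(d_e-1)}$; taking $\eta$ equal to the right-hand side gives, with probability at least $1-\delta$, $f(\hat x_K)-f^*\le\eta+\epsilon=\oO\big(\big(\tfrac1K\log\tfrac1\delta\big)^{2/(d_e-1)}+\epsilon\big)$, which is the claim. (For $d_e=1$ the statement is immediate: the random line then spans all of $\yY$, so already $\min_{x\in\lL_1}f(x)=f^*$.)

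I expect the geometric/probabilistic step of the second paragraph to be the main obstacle: correctly identifying that the projected direction is uniform on $S^{d_e-1}$ and translating ``the line passes within $\rho$ of $x^*_\top$'' into a spherical-cap probability of order $(\rho/D)^{d_e-1}$, which is precisely where the exponent $2/(d_e-1)$ originates — the factor $2$ coming from $\rho\propto\sqrt\eta$ via the quadratic bound. The remaining pieces (the Hessian bound from the twice-differentiable kernel, monotonicity of the incumbent, the union-over-$K$ amplification, optimizing $\eta$) are routine; a secondary nuisance is making the incumbent monotonicity rigorous given only noisy oracle feedback, which is handled by the standard confidence-set machinery and can be folded into $\delta$.
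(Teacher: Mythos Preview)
Your proposal is correct and follows the same strategy as the paper: a quadratic over-approximation near $x^*$ yields a target ball of radius $\propto\sqrt{\eta}$ in the effective subspace, the per-iteration probability that the random line hits it is $\Omega(\eta^{(d_e-1)/2})$, and amplifying over the $K$ independent direction draws and solving for $\eta$ produces the stated rate. Your spherical-cap computation via the uniformity of the projected direction on $S^{d_e-1}$ is in fact more explicit than the paper's volume heuristic, but the arguments are equivalent.
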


The proof is deferred to Appendix \ref{app:global_convergence}. The result should be understood as a property of random exploration and is the best one can hope for on worst-case examples. Instances, where random search is competitive have been reported in literature \cite{Bergstra2012, Wang2016, Li2017Hyperband} and this has been attributed to the same effect. However, random search fails to control the error induced by the noise, and our method has the advantage of using the GP model to deal with the noise in a principled way.

In contrast to other algorithms that exploit subspace structure, including the REMBO algorithm of \citet{Wang2016} and SI-BO of \citet{Djolonga2013}, our formulation does \emph{not require} the knowledge of $d_e$ in advance. Intuitively, we can demonstrate the consequence of the effective dimension and the random line algorithm by plotting the set $V_\epsilon := \{x | f(x) - f^* \leq \epsilon\}$ that appears as an isolated spike in the domain in the worst-case. For functions with an invariant subspace, the volume of the set  $V_\epsilon$ increases substantially and hence the probability of a random line passing through this region increases (see Figure \ref{fig:subspace}). 

Naturally, such a bound cannot avoid an exponential scaling with $d_e$, as also does not full-scale Bayesian optimization even when restricted to the effective subspace. However, we show in the next section, that if our algorithm finds a point in the proximity of a local optimum, the convergence is dominated by a \emph{fast} local rate, a property not exhibited by random search.

\begin{figure*}[t]
	\centering
	\begin{subfigure}[b]{\textwidth}
		\vspace{-5px}
	\hspace{-6px}	\includegraphics{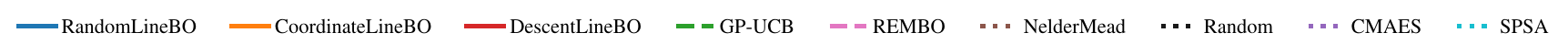}
	\end{subfigure}
	\begin{subfigure}[t]{1em}
		\vskip -92pt
		\rotatebox{90}{\small{simple regret}}
	\end{subfigure}
	\begin{subfigure}[b]{0.32\textwidth}
		\includegraphics{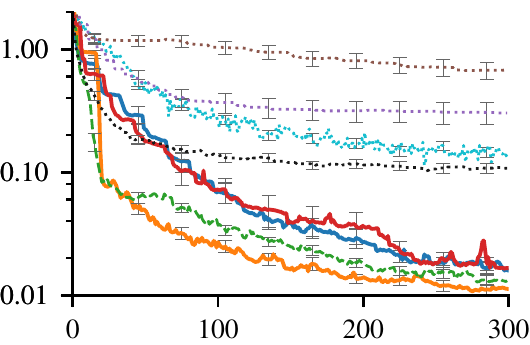}
		\caption{Camelback2D}
		\label{fig:camelback}
	\end{subfigure}\hfill%
	\begin{subfigure}[b]{0.32\textwidth}
		\includegraphics{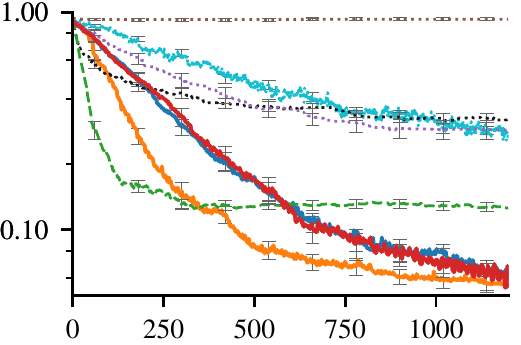}
		\caption{Hartmann6D}
		\label{fig:hartmann6}
	\end{subfigure}\hfill%
	\begin{subfigure}[b]{0.32\textwidth}
		\includegraphics{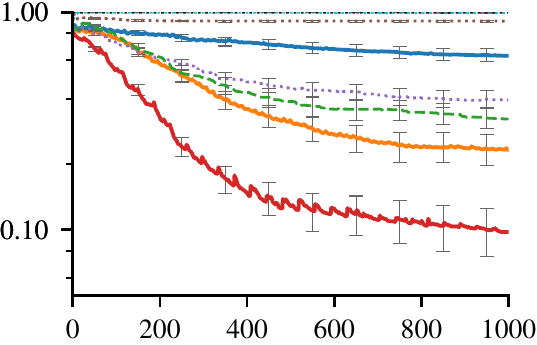}
		\caption{Gaussian10D}
		\label{fig:gaussian10}
	\end{subfigure}
	\begin{subfigure}[t]{1em}
	\vskip -92pt
	\rotatebox{90}{\small{simple regret}}
\end{subfigure}
	\begin{subfigure}[b]{0.32\textwidth}
		\includegraphics{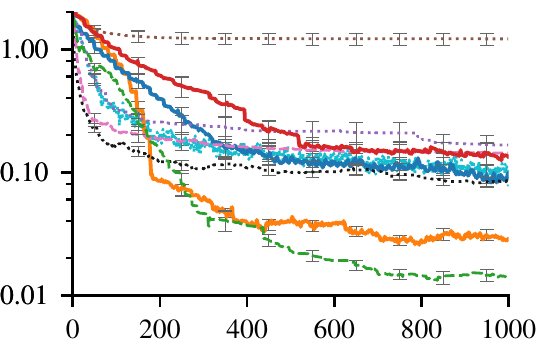}
		\caption{Camelback2D+10D}
		\label{fig:camelback_sub}
	\end{subfigure}\hfill%
	\begin{subfigure}[b]{0.32\textwidth}
		\includegraphics{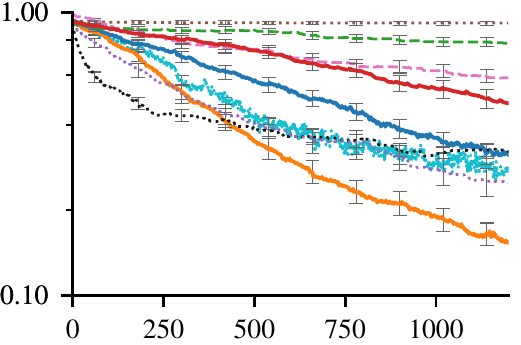}
		\caption{Hartmann6D+14D}
		\label{fig:hartmann6_sub}
	\end{subfigure}\hfill%
	\begin{subfigure}[b]{0.32\textwidth}
		\includegraphics{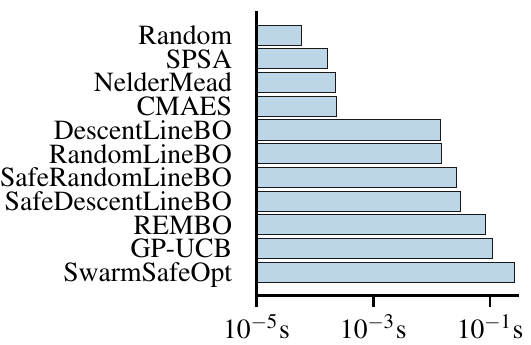}
		\caption{Average time per iteration}
		\label{fig:evaltime}
	\end{subfigure}
	\caption{We compare on standard functions, Camelback (\subref{fig:camelback}) and Hartmann6 (\subref{fig:hartmann6}). A 10d Gaussian (\subref{fig:gaussian10}) is used to demonstrate local convergence, with a starting point such that picking up the gradient signal is difficult. We further add invariant subspaces (\subref{fig:camelback_sub}, \subref{fig:hartmann6_sub}). Figure (\subref{fig:evaltime}) shows per-iteration computation time on a 10d benchmark. Naturally, the model-free approaches are quite fast, whereas the Bayesian optimization method have the computational burden of the GP model. However, restricting the possible acquisition space improves the per-step computation time by one order of magnitude in our implementation compared to the standard GP-UCB in our implementation. }
	\label{fig:results_unconstraint}
\end{figure*}

\begin{figure*}[t]
	\centering
	\begin{subfigure}[t]{\textwidth}
		\vspace{-8px}
	\hspace{40px}	\includegraphics{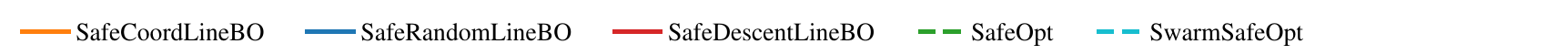}
	\end{subfigure}
	\begin{subfigure}[t]{1em}
	\vskip -80pt
	\rotatebox{90}{\small{simple regret}}
	\end{subfigure}
	\begin{subfigure}[t]{0.32\textwidth}
		\includegraphics{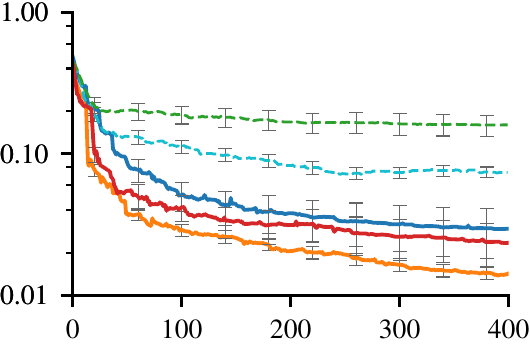}
		\caption{Camelback2D-Constraint}
		\label{fig:camelback_constraint}
	\end{subfigure}
	\begin{subfigure}[t]{0.32\textwidth}
		\includegraphics{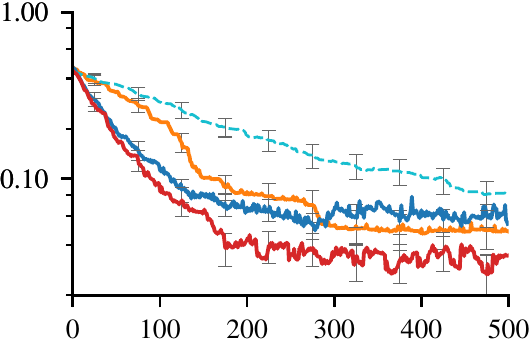}
		\caption{Camelback2D+10D-Constraint}
		\label{fig:camelback_sub_constraint}
	\end{subfigure}
	\begin{subfigure}[t]{0.32\textwidth}
		\includegraphics{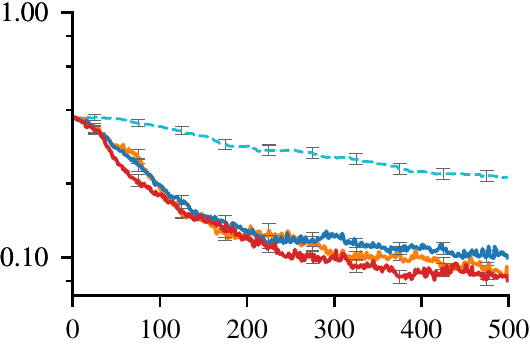}
		\caption{Hartmann6D-Constraint}
		\label{fig:hartmann6_constraint}
	\end{subfigure}
	\caption{We compare on standard benchmarks with additional constraints. Note that \textsc{SafeOpt} relies on a discretized domain and is therefore not applicable in the high-dimensional benchmarks. We found that performance of the methods strongly depends on the initial point, here we show average performance over a starting point chosen uniformly random in the safe set.}
	\label{fig:constraint}
\end{figure*}

\subsection{Local Convergence}
By Taylor's theorem, differentiable functions have an open set around their minimizers where the function is convex or even strongly-convex. We show that if our algorithm starts in a subset of the domain where the function is strongly convex, it converges to the (local) minimum at a linear rate. Again, we focus on the instance where directions are picked at random. The key insight is that random directions can be used as descent directions in the following sense.

\begin{lemma}[Random Descent Direction] \label{lem:random_directions}
	Let $l \in \RR^d$ be a uniformly random point on the $d$-dimensional unit sphere or uniformly among an orthonormal basis. Then,
	\begin{align*}
\text{for all } x \in \xX, \qquad \EE[\< \nabla f(x), l \>^2] = \frac{1}{d} \|\nabla f(x)\|^2   \text{ .}
	\end{align*}
\end{lemma}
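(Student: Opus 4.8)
The plan is to observe that the expectation is taken only over the random direction $l$, so for each fixed $x$ the quantity $\nabla f(x)$ is just a deterministic vector; it therefore suffices to prove that for an arbitrary fixed $v \in \RR^d$ one has $\EE[\langle v, l\rangle^2] = \frac{1}{d}\|v\|^2$, and then instantiate $v = \nabla f(x)$. I would treat the two admissible laws of $l$ separately, since they lead to the same identity by slightly different routes.

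For the orthonormal-basis case, write $l = b_J$ where $\{b_1,\dots,b_d\}$ is the orthonormal basis and $J$ is uniform on $\{1,\dots,d\}$. Then $\EE[\langle v, l\rangle^2] = \frac{1}{d}\sum_{i=1}^d \langle v, b_i\rangle^2$, and Parseval's identity $\sum_{i=1}^d \langle v, b_i\rangle^2 = \|v\|^2$ closes this case immediately.

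For the uniform-sphere case, I would expand $\langle v, l\rangle^2 = \sum_{i,j=1}^d v_i v_j\, l_i l_j$ and push the expectation inside, reducing everything to the second moments $\EE[l_i l_j]$. The uniform law on the unit sphere is invariant under coordinate permutations and under a sign flip of any single coordinate; flipping the sign of one of the two indices forces $\EE[l_i l_j] = 0$ for $i \neq j$, while permutation invariance makes $\EE[l_i^2]$ the same for every $i$, and since $\sum_i l_i^2 = 1$ almost surely this common value must be $1/d$. Substituting back gives $\EE[\langle v, l\rangle^2] = \frac{1}{d}\sum_i v_i^2 = \frac{1}{d}\|v\|^2$. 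A shorter alternative avoids the coordinatewise symmetry argument: by orthogonal invariance of the uniform measure on the sphere one may rotate so that $v = \|v\|\, e_1$, after which only $\EE[l_1^2] = 1/d$ is needed, and that in turn follows from $\sum_i \EE[l_i^2] = 1$ together with the fact that all $\EE[l_i^2]$ are equal.

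I do not expect any real obstacle here: this is a short symmetry and second-moment computation. The only step deserving explicit justification rather than assertion is the evaluation of $\EE[l_i l_j]$ on the sphere, i.e.\ that the off-diagonal second moments vanish and the diagonal ones equal $1/d$; the rotation argument is the cleanest way to discharge this, since it reduces all of the required symmetry to a single invocation of orthogonal invariance.
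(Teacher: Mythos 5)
Your proposal is correct and follows essentially the same route as the paper: expand the square, reduce to the second moments $\EE[l_i l_j]$, and evaluate them via $\sum_i l_i^2 = 1$ together with symmetry (the paper's proof is exactly this, just stated more tersely). Your Parseval argument for the orthonormal-basis case and the rotation-invariance shortcut for the sphere are minor, equivalent repackagings of the same computation.
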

The standard proof technique for descent algorithms on strongly convex functions \cite{Nesterov2012} yields the following result; see Appendix \ref{app:local_convergence} for a proof.

\begin{proposition}\label{prop:local_convg}
	Let $f$ satisfy Assumption \ref{ass:f_regular}, be $\alpha$-strongly convex and $\beta$-smooth if restricted to $\xX_c \subset \xX$. Let $f_c^* = \max_{x \in \xX_c} f(x)$ and assume all iterates  $\hat{x}_k$ are contained in $\xX_c$. Then, after $K$ iterations of Algorithm \ref{alg:linebo} with accuracy $\epsilon$ and random directions that satisfy Lemma \ref{lem:random_directions}, it holds that,
	\begin{align*}
	\EE[f(\hat{x}_K)] - f^*_c \leq \frac{\epsilon \beta d}{\alpha} + \left(1-\frac{\alpha}{\beta d}\right)^K(f(x_0) - f^*_c).
	\end{align*}
\end{proposition}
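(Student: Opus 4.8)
The plan is to mimic the textbook analysis of randomized descent on strongly convex functions (cf.\ \citet{Nesterov2012}): the one-dimensional sub-solver plays the role of an approximate exact line search, and Lemma~\ref{lem:random_directions} converts a random directional decrease into a decrease of $\|\nabla f\|^2$. Throughout I set $f_c^* := \min_{x \in \xX_c} f(x)$, take $\xX_c$ convex (so that $\alpha$-strong convexity and $\beta$-smoothness on $\xX_c$ are meaningful and segments between points of $\xX_c$ stay in $\xX_c$), and use that $\hat x_0 = x_0$ is deterministic.

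Fix an iteration $i$ and condition on $\hat x_{i-1}$ and the sampled unit direction $l_i$. Since $\lL_i = \lL(\hat x_{i-1}, l_i)$ passes through $\hat x_{i-1}$, it also contains the point $y_i := \hat x_{i-1} - \frac{1}{\beta}\langle \nabla f(\hat x_{i-1}), l_i\rangle\, l_i$, which lies in $\xX_c$ by the hypothesis that the iterates and line searches remain in $\xX_c$, together with convexity. The $\beta$-smoothness descent lemma applied along the segment $[\hat x_{i-1}, y_i]$ gives $\min_{x\in\lL_i} f(x) \le f(y_i) \le f(\hat x_{i-1}) - \frac{1}{2\beta}\langle \nabla f(\hat x_{i-1}), l_i\rangle^2$. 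Running the sub-solver to accuracy $\epsilon$ (Section 4.1; Appendix~\ref{app:sample_complexity_bo}) yields $\EE[f(\hat x_i)\mid \hat x_{i-1}, l_i] \le \min_{x\in\lL_i} f(x) + \epsilon$. Taking the expectation over $l_i$ and invoking Lemma~\ref{lem:random_directions}, i.e.\ $\EE[\langle \nabla f(\hat x_{i-1}), l_i\rangle^2 \mid \hat x_{i-1}] = \frac{1}{d}\|\nabla f(\hat x_{i-1})\|^2$, followed by the Polyak--{\L}ojasiewicz bound $\|\nabla f(x)\|^2 \ge 2\alpha(f(x) - f_c^*)$ (which follows from $\alpha$-strong convexity by minimizing the quadratic lower bound $f(z)\ge f(x)+\langle\nabla f(x),z-x\rangle+\frac{\alpha}{2}\|z-x\|^2$ over $z\in\RR^d$), gives
\[
\EE[f(\hat x_i) - f_c^* \mid \hat x_{i-1}] \;\le\; \Bigl(1 - \frac{\alpha}{\beta d}\Bigr)\bigl(f(\hat x_{i-1}) - f_c^*\bigr) + \epsilon .
\]

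Now set $\Delta_i := \EE[f(\hat x_i)] - f_c^*$ and $q := 1 - \frac{\alpha}{\beta d} \in [0,1)$ (using $\alpha \le \beta$ and $d \ge 1$). Taking total expectations and applying the tower rule yields $\Delta_i \le q\,\Delta_{i-1} + \epsilon$; unrolling from $\Delta_0 = f(x_0) - f_c^*$ and summing the geometric series $\sum_{j=0}^{K-1} q^j \le \frac{1}{1-q} = \frac{\beta d}{\alpha}$ gives $\Delta_K \le q^K (f(x_0) - f_c^*) + \frac{\epsilon \beta d}{\alpha}$, which is exactly the claimed bound.

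The only non-routine step is the geometric claim $\min_{x\in\lL_i} f(x) \le f(\hat x_{i-1}) - \frac{1}{2\beta}\langle\nabla f(\hat x_{i-1}), l_i\rangle^2$: it requires the auxiliary point $y_i$ (and the segment to it) to stay inside $\xX$, where the sub-solver searches, and inside $\xX_c$, where smoothness holds. This is precisely what the hypothesis that all iterates are contained in $\xX_c$ buys us, though a fully rigorous argument should additionally assume $\xX_c$ convex (and, to remove any constraint on the step length $\frac{1}{\beta}|\langle\nabla f,l_i\rangle|$ relative to $\mathrm{diam}(\xX_c)$, one may simply restrict the sub-solver to $\lL_i \cap \xX_c$). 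A secondary point is that the sub-solver's $\epsilon$-accuracy is used in the conditional-expectation sense above; a high-probability version of the sample-complexity guarantee would require an extra union bound over the $K$ iterations. Everything else — the Polyak--{\L}ojasiewicz inequality and the telescoping — is standard.
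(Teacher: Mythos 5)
Your proof is correct and follows essentially the same route as the paper's: an exact-line-search lemma obtained by plugging the step $h=-\tfrac{1}{\beta}\langle\nabla f(\hat x_{i-1}),l_i\rangle$ into the smoothness descent inequality, averaging over the random direction via Lemma~\ref{lem:random_directions}, applying the Polyak--{\L}ojasiewicz bound, and then telescoping the resulting recursion $\Delta_i \le (1-\tfrac{\alpha}{\beta d})\Delta_{i-1}+\epsilon$ with the geometric series summing to $\tfrac{\beta d}{\alpha}$. Your added caveats (convexity of $\xX_c$, the auxiliary point staying in the domain, and the interpretation of $f_c^*$ as a minimum) are reasonable tightenings of hypotheses the paper's own proof also leaves implicit.
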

	To interpret the result, we fix the total number of evaluations $T$ and assume $f^*_c = f^*$. If the kernel $k$ restricted to any one dimensional subspace satisfies Assumption \ref{ass:bounded_gamma}, we can set the accuracy $\epsilon =\left( \frac{ d \log T  }{2T}\right)^{(1-2\kappa)/2}$. Then, with the previous proposition, the simple regret is bounded by
	\begin{equation*}
	\EE[r_{T}]	\leq  \oO\left(d^{3/2 - \kappa}  \left({\log T}/{T}  \right)^{1/2 - \kappa} \right).
	\end{equation*}

Importantly, the bound has only a \emph{polynomial} dependence on $d$, for instance with the squared exponential kernel ($\kappa = 0$) we get $r_T \leq \oO(d^{3/2}\sqrt{\log T/T})$.

\subsection{Convergence under safety constraints}
The ability to use an arbitrary line solver for the subproblems allows us to implement safety by using a safe BO algorithm such as \textsc{SafeOpt} as a sub-solver.
We call \textsc{LineBO} with \textsc{SafeOpt} as sub-solver \textsc{SafeLineBO}. Formally, we define the safe set $\sS = \{x \in \xX | g(x) \leq  0\}$. It is unavoidable that an initial safe point $x_0 \in \sS$ must be provided. The best one can hope for is the exploration of the reachable safe set $\sS_0$, which can be defined as the connected component of $\sS$ that contains $x_0$. For details, we refer to \citet{Sui2015} and \citet{Berkenkamp16} for multiple constraints.
 
The one dimensional subproblems are guaranteed to be solved safely by the guarantees of \textsc{SafeOpt} under the same additional technical assumptions as for the original algorithm. However a natural question arises as to what extend the safe set is explored sufficiently when restricting the acquisition to one-dimensional subspaces. To allow for the possibility that a safe maximizer can be reached within one iteration from a given safe starting point, the straight line segment from this point to the optimum needs to be contained in $\sS$. Naturally, this is guaranteed if the safe set $\sS$ is convex; but other conditions are possible. For instance, if the level set $\xX_1 = \{x : f(x) > f(x_0)\} \subset \sS$ is both safe and convex, one can expect that the iterates do not leave $\xX_1$ and consequently the optimum is found. Note that this is a natural condition that arises if the function is convex on a subset of domain, as we assume for our local convergence guarantees. On the other hand, it is easy to construct counterexamples even in two dimensions that are successfully solved by \textsc{SafeOpt} but not with the \textsc{LineBO} method (for instance with a U-shaped safe set). In practice, however, this might not be a severe limitation, in particular if constraint violations are not expected close to the optimum.

\section{Practical Considerations}	
Our main goal is to provide a practical Bayesian optimization algorithm, with the main benefit that the acquisition step can be solved efficiently. We note that this enables the use of acquisition functions such as Thompson sampling or Max Value Entropy Search that rely on sampling the GP posterior and where an analytical expression is not available. Besides this, our methods has several further practical advantages, as we explain below.

\paragraph{Direction Oracles} Picking random directions is one possibility to define the sub-problems, that allows us to simultaneously obtain global and local guarantees. In practice, random directions can increase variance and by instead choosing an (approximate) descent direction it is possible to trade-off global for local exploration. An alternative way is to choose the directions coordinate aligned (\textsc{CoordinateLineBO}). This we found to be efficient on many benchmark problems, likely because of reduced variance and symmetries in the objective. If one seeks to speed up local convergence, using a gradient estimate is the obvious choice. As the gradient-norm becomes smaller, one can eventually switch to random directions to encourage random exploration. For estimating descent directions, we implement the following heuristic based on Thompson Sampling. First, we take the gradient $\tilde{g}$ at $\hat{x}_i$ of a sample from the posterior GP. Then we evaluate $\hat{x}_i + \alpha\tilde{g}$, where $\alpha$ is a small step size, and update the model. After several such steps ($\sim d$ times), we use the gradient of the posterior mean at $\hat{x}_i$ as direction oracle (see Appendix \ref{app:descent_oracle} for details). In our experiments, we found that this method (\textsc{DescentLineBO}) improves local convergence, and this variant was used on the free electron laser as well.
\paragraph{Global Model}
We introduced the \textsc{LineBO} method with a global GP model as usually done for Bayesian optimization. This has the advantage that data is \emph{shared} between the sub-problems, which can speed up convergence, but comes at the cost of inverting the kernel matrix. The iterative update cost is quadratic in the number of data points, which becomes a limiting factor typically around a few thousand steps. It is also possible to use independent sub-solvers or keep a fixed-sized data buffer; as long as the sub-problems are solved sufficiently accurately, this does not affect our theoretical guarantees and yields a further speedup. 

\paragraph{User Feedback}
An additional benefit of restricting the acquisition function to a one-dimensional subspace is that we can plot evaluations together with the model predictions on this subspace. One example that we obtained when we tuned the SwissFEL is shown in Figure \ref{fig:user_feedback}. This allows to better understand the structure of the optimization problem; moreover, it provides valuable user-feedback, as it allows to monitor model fit and to adjust GP-hyperparameters. With safety constraints this is of particular importance, as a misspecified GP model cannot capture the safe set correctly and might cause  constraint violations.

\section{Empirical Evaluation}
\subsection{Synthetic Benchmarks}

As for standard benchmarks we use the Camelback (2d) and the Hartmann6 (6d) functions. Further, we use the Gaussian $f(x) = -\exp(-4\|x\|_2^2)$ in 10 dimensions as a benchmark where local convergence is sufficient; note that when restricted to a small enough Euclidean ball this function is strongly convex. To obtain benchmarks with invariant subspaces, we augment the Camelback and Hartmann6 function with 10 and 14 auxiliary dimensions respectively, and shuffle the coordinates randomly. For the constraint case, we add an upper bound to the objective, ie $g(x) = -f(x) + \tau$ for some threshold $\tau$. We found that the performance of the local methods (including our approach) depends on the initial point. For that reason, we randomized the initial points for the Camelback and Hartmann6 function uniformly in the domain; in the constrained case restricted to the safe-set. On the Gaussian function we randomize on the level set $\{x : f(x) = y_0 \}$ with $y_0 = -0.2$ in the unconstrained and $y_0=-0.4$ in the constrained case. On all experiments we add Gaussian noise with standard deviation 0.2, to obtain a similar signal-noise ratio as on our real-world application. 

We compare our approach to random search, Nelder-Mead, SPSA, CMAES and standard GP-UCB.  For the subspace problems we additionally compare to REMBO and its interleaved variant. The latter never perform better in our experiments and is omitted from the plots for visual clarity. In the constrained case we compare to \textsc{SafeOpt} in 2 dimensions, and to the \textsc{SwarmSafeOpt} heuristic on the higher-dimensional benchmarks. We use public libraries where available, details can be found in Appendix \ref{app:synthetic_experiments}. For our \textsc{LineBO} methods, we use the UCB acquisition function. We manually chose reasonable values for hyperparameters of the methods or use recommended setting where available, but we did not run an exhaustive hyperparameter search (which would arguable not be possible in most real-world applications). All methods that use GPs share the same hyperparameters, expect on the Gaussian, where a smaller lengthscale for GP-UCB resulted in better performance.

We evaluate progress using simple regret. All regret plots show a fair comparison in terms of the total number of function evaluations on the x-axis. To compute the simple regret, each method suggests a candidate solution in each iteration (in addition to the optimization step), which is evaluated but not used in the optimization. Naturally for the GP-methods, this was chosen as the best mean of the model, and for our line methods, the best mean was determined on the current subspace. The Nelder-Mead and SPSA implementation we used did not have such an option, so progress on each evaluation is shown. Each experiment was repeated 100 times and confidence bars show the standard error.

\begin{figure*}[t]
	\centering
	\begin{subfigure}[t]{0.33\textwidth}
		\includegraphics{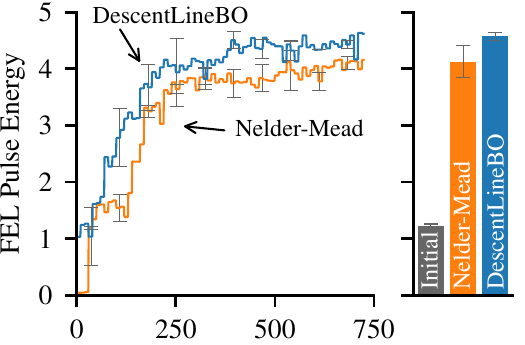}
		\caption{24 parameters}
		\label{fig:swissfel24d}
	\end{subfigure}
	\begin{subfigure}[t]{0.33\textwidth}
		\includegraphics{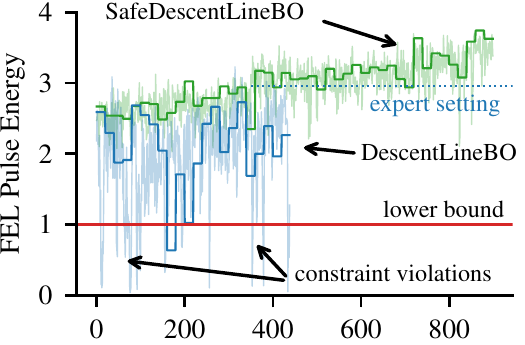}
		\caption{24 parameters with constraint}
		\label{fig:swissfel24d_safe}
	\end{subfigure}
	\begin{subfigure}[t]{0.33\textwidth}
		\includegraphics[width=160px, trim={40px 10px 40px 95px},clip]{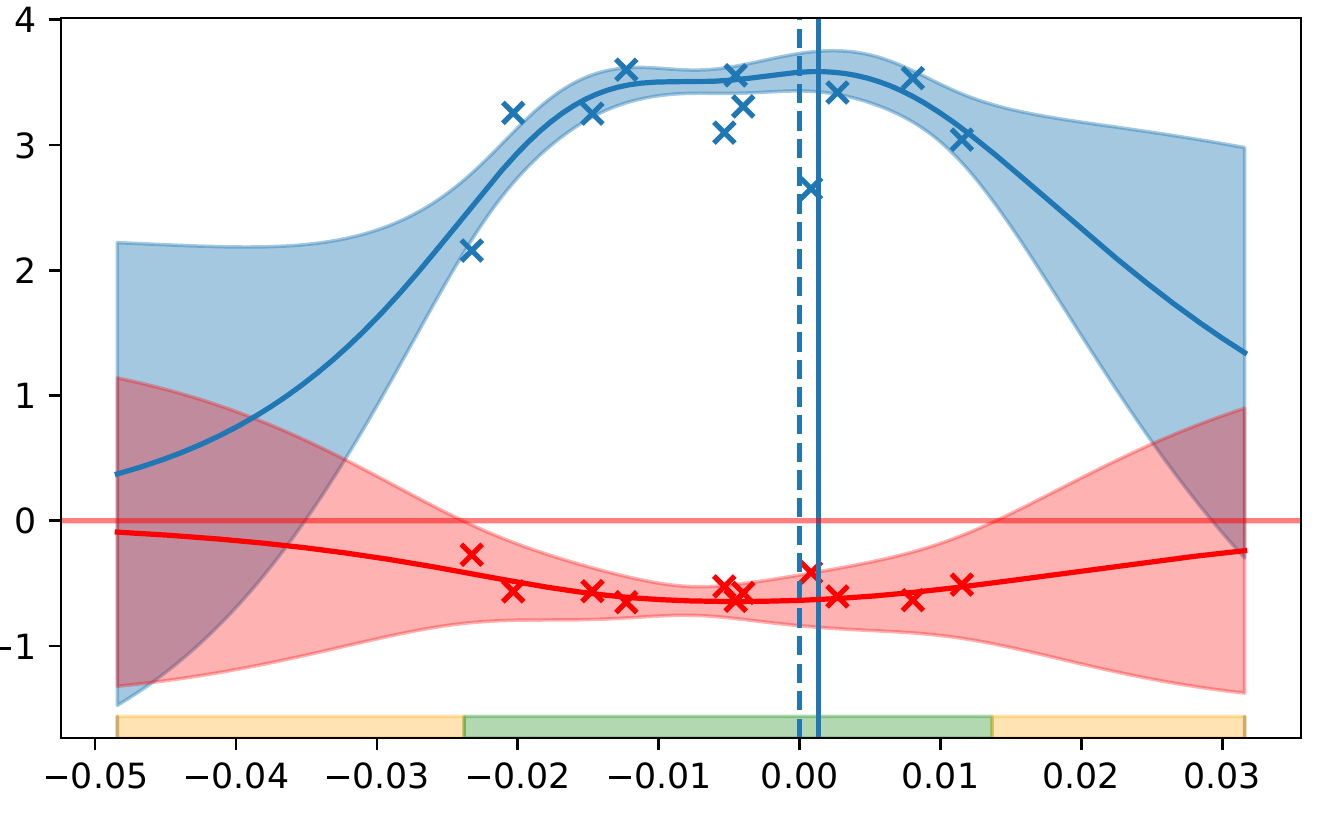}
		\caption{User feedback}
		\label{fig:user_feedback}
	\end{subfigure}
	\caption{Experiments on the Swiss Free Electron Laser (SwissFEL). (\subref{fig:swissfel24d}) Comparison of Nelder-Mead and \textsc{DescentLineBO}. (\subref{fig:swissfel24d_safe}) Optimization with safety constraints (here \textsc{DescentLineBO} was stopped early). (\subref{fig:user_feedback}) Slice plot provided as user feedback; these allow to monitor the GP fit and adjust hyper-parameters (red: safety constraints, blue: objective, crosses: line evaluations). Note that the model predictions are a slice of the global model, which also depends on observations from previous lines.}
	\label{fig:swissfel_results}
\end{figure*}

The results for the unconstrained case are presented in Figure \ref{fig:results_unconstraint}. In the standard Camelback and Hartmann6 benchmarks, we obtain competitive performance. In particular the \textsc{CoordinateLineBO} method works well, which might be due to symmetries in the benchmarks. The benchmark on the Gaussian function is challenging in that the initial signal is of the same magnitude as the noise. If an optimization algorithm initially takes steps away from the optimum, the objective quickly gets very flat, and it becomes difficult to recover by means of a gradient signal only. We found that our method allow to robustly take steps towards the optimum, where local-convergence can be guaranteed, outperforming the standard GP-UCB and CMAES algorithm. Note that the \textsc{DescentLineBO} method works particularly well on this example, as it is designed to use the estimated gradient as line directions; but it does not necessarily perform better on the other benchmarks. When adding an invariant subspace (Figure \ref{fig:results_unconstraint} \subref{fig:camelback_sub}, \subref{fig:hartmann6_sub}), our methods remain competitive with the bulk of methods, but surprisingly also UCB works very well on the camelback function with augmented coordinates. This might be due to an effect similar as in Proposition \ref{prop:global} carrying over from the random restarts of the approximate acquisition function optimizer. 

Figure \ref{fig:evaltime} shows computation time per iteration in a 10 dimensional setting (Hartmann6d+4d) averaged over 500 steps. Our methods obtain roughly one order of magnitude speed up compared to the full-scale Bayesian optimization methods; however this is of course dependent on the implementation. For GP-UCB and REMBO, we optimize the acquisition function using L-BFGS with 50 restarts, where starting points are either randomly chosen or from a previous maximizer.

The results for the constrained case can be found in Figure \ref{fig:constraint}. Our methods clearly outperform both \textsc{SafeOpt} and \textsc{SwarmSafeOpt} in terms of simple regret.

\subsection{Tuning the Swiss Free Electron Laser}
Parameter tuning is a tedious and repetitive task for operation of free electron lasers. The main objective is to increase the laser energy measured by a gas detector at the end of the beam-line. Among hundreds of available parameters that expert operators usually adjust, some parameter groups allow for automated tuning. Those include quadrupole currents settings, beam position parameters and configuration variables of the undulators. For our tests, a suitable subset of 5-40 parameters was selected by machine experts. The machine is operated at 25 Hz and we averaged 10 consecutive evaluations to reduce noise. Ideally, the computation time per step is well below 1s to avoid slowing down the overall optimization. This effectively rules out full-scale Bayesian optimization given the number of parameters. Besides manual tuning by operators, a random walk optimizer is in use and reported to often achieve satisfactory performance when run over a longer period of time; in other cases it did not improve the signal while other methods did. This hints that hill-climbing on the objective should be taken into account as a feasible step towards an acceptable solution, but global exploration and noise robustness are important, too. Nelder-Mead is mostly considered as standard benchmark in the accelerator community. Standard Bayesian optimization was previously reported to outperform it \citep{McIntire2016}, but safety constraints, and the efficient scaling to high dimensions were not considered.
Safe operation constraints include electron loss monitors and a lower threshold on the pulse energy, which is important to maintain during user operation. For our experiments we were mainly concerned with the latter, as at the time of testing, the loss monitoring system could not be used for technical reasons; but this will be an important addition once implemented.

Our results are shown in Figure \ref{fig:swissfel24d}. To obtain a systematic comparison, we manually detuned the machine, then run both Nelder-Mead and \textsc{DescentLineBO} twice from the same starting point (limited machine development time did not allow for a more extensive comparison). Our method soundly outperforms Nelder-Mead, both in terms of convergence speed and pulse energy at the final solution. A direct comparison between the \textsc{LineBO} and \textsc{SafeLineBO} in Figure \ref{fig:swissfel24d_safe} shows that the safe method is able to maintain the safety constraint. The safety constraint has the additional benefit of restricting the search space which we found to improve convergence in this case. The solution obtained after 600 steps (after $\sim15$ min) already achieves a higher pulse energy than the previous expert setting, which was obtained with the help of a local random walk optimizer. A single, successful run with 40 parameters can be found in Figure \ref{fig:swissfel40d}.

\section{Conclusion}
We presented a novel and practical Bayesian optimization algorithm, \textsc{LineBO}, which iteratively decomposes the problem to a sequence of one dimensional sub-problems. This addresses the often ignored issue of how to maximize the acquisition function, and allows to scale the method to high-dimensional settings. We showed that the algorithm is theoretically as well as practically effective. In addition, it can also be used with safety constraints by means of safely solving each sub-problem, and is therefore, to the best of our knowledge, the first method to achieve this. Finally, we demonstrated how we apply the \textsc{SafeLineBO} method on SwissFEL for tuning the pulse energy with up to 40 parameters on a continuous domain while satisfying safe operation constraints.

\section*{Acknowledgements}
The authors thank in particular Manuel Nonnenmacher for the work he did during his Master thesis and Kfir Levy for valuable discussions and feedback. For the experiments on the free electron laser, the authors would like to acknowledge the support of the entire SwissFEL team.

This research was supported by SNSF grant 200020\_159557 and 407540\_167212 through the NRP 75 Big Data program. Further, this project has received funding from the European Research Council (ERC) under the European Union’s Horizon 2020 research and innovation programme grant agreement No 815943. 

%
\bibliography{references.bib}
\bibliographystyle{icml2019_style/icml2019}

\newpage
\newcommand{\Vol}{\text{Vol}}
\appendix
\section{Bayesian Optimization}\label{app:BO}

A general outline of Bayesian optimization is given in Algorithm \ref{alg:BO} below. The evaluation point is determined using an acquisition function $\alpha(x|\mM)$ that typically depends on the GP model $\mM$. One of the most common acquisition functions, that has been analyzed theoretically, is GP-UCB \cite{Srinivas2009}. With posterior mean $\hat{f}_t$ and posterior standard deviation $\sigma_t(x)$ it is defined as the lower confidence bound (in the minimization setting),
\begin{align}\label{eq:ucb}
\alpha(x) = \hat{f}_t(x) - \beta_t \sigma_t(x)
\end{align}
for a scaling factor $\beta_t$ (for details, see \citet{Srinivas2009}).

\begin{algorithm}[h]
	\caption{Bayesian Optimization (BO)}\label{alg:BO}
	\begin{algorithmic}[1]
		\REQUIRE Domain $\xX$, accuracy $\epsilon$, acquisition function $\alpha$
		\\GP prior $\mM_0 = GP(\mu_0, k_0)$
		\FOR{$t=1,2,3,\dots$}
		\STATE $x_t \gets \argmin_{x \in \xX} \alpha(x|\mM_{t-1})$\hfill \textit{// acquisition step}
		\STATE $y_t \gets f(x_t) + \epsilon$ \hfill \textit{// obtain observation}
		\STATE $\mM_t \gets \mM_{t-1}|(x_t,y_t)$ \hfill \textit{// update posterior}
		\STATE $\hat{x}_t = \argmin_{x \in \xX} err(x)$ \hfill \textit{// best point, eq. \ref{eq:gp_err} }
		\IF{$err(\hat{x}_t) \leq \epsilon$}
		\RETURN $\hat{x}_t$, $\mM_t$ \hfill \textit{// best point,  posterior model}
		\ENDIF
		\ENDFOR
	\end{algorithmic} 
\end{algorithm}
\subsection{Sample Complexity of Bayesian Optimization}\label{app:sample_complexity_bo}
In the frequentist analysis with confidence intervals $\hat{f}(x) \pm \sigma(x)$, the simple regret at any point $x\in \xX$ can be controlled with
\begin{align}
err(x) := \hat{f}(x) + \sigma(x) - (\min_{x'\in \xX} \hat{f}(x') - \sigma(x')) \text{ .} \label{eq:gp_err}
\end{align}
The sample complexity bounds of GP-UCB make sure that the breaking condition in Algorithm \ref{alg:BO} is eventually satisfied. Even though these bounds are typically formulated for cumulative regret, the proofs in fact bound the following quantity,
\begin{align*}
\sum_{t=1}^T err(x_t) \leq \oO(\gamma_T \sqrt{T}) \text{ ,}
\end{align*}
see \citep[Theorem 3]{Chowdhury2017}. From this a bound on simple regret follows as
\begin{align*}
r_T \leq err(\hat{x}_T) \leq \frac{1}{T} \sum_{t=1}^T err(x_t) \leq \frac{\gamma_T}{\sqrt{T}} \text{ .}
\end{align*}

Bounds on $\gamma_T$ are known for different kernels, including for the linear kernel: $\gamma_T \leq \oO(d\sqrt{T})$, the RBF kernel: $\gamma_T \leq \oO((\log(T))^{d+1})$, and the Matern kernel with $\nu > 1$: $\gamma_T \leq \oO\big(T^{d(d+1)/(2\nu + d(d+1))}(\log(T))\big)$; see \citet[Theorem 5]{Srinivas2009}.

\subsection{Safe Bayesian Optimization}\label{app:safeopt}

\textsc{SafeOpt} uses an idea that is similar to Algorithm \ref{alg:BO}. A GP-model is used to estimate the implicit constraint function $g$ with confidence intervals $\hat{g}(x) \pm \beta_t\sigma^{g}_t(x)$. The confidence estimates can be used to define a conservatively estimated safe set $\hat{\sS} = \{  x \in \xX : \hat{g}_t(x) + \beta_t\sigma^{g}_t(x) \leq 0 \}$. The acquisition step is then restricted to $\hat{\sS}$, and under the condition that the confidence estimates hold, the algorithm does not violate the constraints. However, the exploration problem becomes more difficult, as both $f$ and $g$ need to be explored in an appropriate way.  For completeness, we reproduce the pseudo-code from \cite{Sui2015,Berkenkamp16} in Algorithm \ref{alg:SafeOpt}. Please refer to the original publication for a more detailed treatment.

In each iteration $t$, the algorithm defines a safe set $\sS_t$, the set of potential minimizers $M_t$, and the expander set $G_t$ with points that can possibly enlarge the safe set. The algorithm uses two functions; the first is the uncertainty at a specific point $x \in \xX$ to determine which points to acquire,
\begin{equation}
	w_t(x) = \max(2\beta_t\sigma^{f}_t(x),2\beta_t\sigma^{g}_t(x)) \text{ .}
\end{equation}
Next, to quantify possible expanders of the safe set,
\begin{equation}\label{eq:p_t}
	p_t(x) = |\{ x \in \xX \setminus \sS_t | \hat{g}_t(x) + \beta\sigma^{g}_t(x) - L\norm{x}_2\geq h  \}| \text{ .}
\end{equation}
Also, denote $u_t(x) = \hat{f}_t(x) +\beta_t\sigma_t^f(x)$ and $l_t(x) = \hat{f}_t(x) - \beta_t\sigma_t^f(x)$ the lower and upper confidence bound of $f$.

\begin{algorithm}[h]
	\caption{\textsc{SafeOpt}}\label{alg:SafeOpt}
	\begin{algorithmic}[1]
		\REQUIRE Domain $\xX$, initial safe set $\sS_0$, safety threshold $h$, Lipschitz constant $L$, GP priors $\mM_0$ for both $f$ and $g$, 
		\FOR{$t=1,2,3,\dots$}
			\STATE $\sS_t \gets \cup_{x\in S_{t-1}} \{ x' \in \xX |\hat{g}_t(x) - L\norm{x-x'}_2 \geq h \}$
			\hfill \\ \textit{ // Enlarge safe set }
			\STATE $G_t \gets \{ x\in \sS_t  | p_t(x) > 0 \text{ as in } \eqref{eq:p_t} \}$ 	\hfill \textit{ // expander set}
			\STATE $M_t \gets \{ x\in \sS_t | l_t(x) \leq \min_{x'\in S_t} u_t(x') \}$
							\\ 	\hfill  \textit{ // plausible minimizer set }
			\STATE $x_t \gets \arg\max_{x \in G_t\cup M_t} w_t(x)$
									\\ 	\hfill  \textit{ // uncertainty sampling }
			\STATE $y_t \gets f(x_t) + \epsilon$ \hfill  \textit{ // acquire observations }
			\STATE $s_t \gets g(x_t) + \epsilon$
			\STATE $\mM_t \gets \mM_{t-1}|(x_t,y_t, s_t)$
				\hfill  \textit{// update posterior }
			\STATE $\hat{x}_t = \argmin_{x \in \xX} err(x)$ \hfill \textit{ // best point, eq. \ref{eq:gp_err} }
			\IF{$err(\hat{x}_t) \leq \epsilon$}
			\RETURN $\hat{x}_t$, $\mM_t$ \hfill \textit{// best point,  posterior model}
			\ENDIF
		\ENDFOR
	\end{algorithmic} 
\end{algorithm}

The algorithm in its original formulation requires the knowledge of a Lipschitz constant $L$. It is however possible to derive a bound on $L$ from the norm bound $\|f\|_\hH$ as by Assumption \ref{ass:f_regular}. Note that this algorithm is in particularly simple to implement in the one-dimensional setting. There, the safe-set is always an interval with its endpoints being possible expanders.

\section{Proofs of Theoretical Results}

\subsection{Global Convergence}\label{app:global_convergence}

We first show the following lemma.
\begin{lemma}\label{lemma:global_lemma}
	Let $f \in \hH_k$ be twice differentiable with effective dimension $d_e$. Further, let $f_K^* = \min_{x \in \lL_1, \dots, \lL_K} f(x)$ be the minimum objective value that can be obtained by minimizing any line up to iteration $K$. Then,
	\begin{equation}
	\PP[f_K^* - f^* \leq \tau] \geq 1 - \exp(-K\xi(\tau))
	\end{equation}
	where $\xi(\tau)$ is a lower bound on the probability that a random line intersects the set $V_\tau = \{ x \in \xX : f(x) \leq f^* + \tau \}$. Further, if the first order condition at the minimum $x^*$ is met, then  $\xi(\tau) = \Omega\left(\tau^{\frac{d_e-1}{2}}\right)$
\end{lemma}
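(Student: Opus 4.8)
The plan is to prove the two claims in turn: first the concentration statement $\PP[f_K^*-f^*\le\tau]\ge 1-\exp(-K\xi(\tau))$, and then the geometric lower bound $\xi(\tau)=\Omega(\tau^{(d_e-1)/2})$ under the first-order condition.

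\textbf{Concentration.} The key observations are that $f_K^*$ is non-increasing in $K$, and that if some line $\lL_i$ intersects $V_\tau$ at a point $z$ then $f_K^*\le f(z)\le f^*+\tau$ (the sub-solver accuracy $\epsilon$ plays no role here, since $f_K^*$ is the exact line-restricted minimum). Hence $\{f_K^*-f^*>\tau\}\subseteq\bigcap_{i=1}^K\{\lL_i\cap V_\tau=\emptyset\}$. Since the direction $l_i$ is drawn independently of the history $\mathcal{F}_{i-1}$ that determines the offset $\hat x_{i-1}$, conditioning on $\mathcal{F}_{i-1}$ the event $\{\lL(\hat x_{i-1},l_i)\cap V_\tau=\emptyset\}$ has probability at most $1-\xi(\tau)$, where we use that $\xi(\tau)$ is by definition a lower bound on the hitting probability that is \emph{uniform over the offset}. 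Peeling off the iterations one at a time then gives $\PP[\bigcap_{i=1}^K\{\lL_i\cap V_\tau=\emptyset\}]\le(1-\xi(\tau))^K\le\exp(-K\xi(\tau))$.

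\textbf{Geometric bound.} Here I would combine the invariant-subspace structure with a local Taylor expansion and a spherical-cap estimate. Let $\yY$ be the $d_e$-dimensional effective subspace and split coordinates as $x_\top\oplus x_\perp$. Since $f(x_\top\oplus x_\perp)=f(x_\top\oplus 0)$, the set $V_\tau$ is a cylinder, $V_\tau=W_\tau\oplus\yY_\perp$ with $W_\tau=\{x_\top\in\yY:f(x_\top\oplus 0)\le f^*+\tau\}$. As $k$ is twice differentiable, so is $f\in\hH_k$, and on the compact domain $\|\nabla^2 f\|$ is bounded, say by $2M$; combined with the first-order condition $\nabla f(x^*)=0$, Taylor's theorem gives $f(x)\le f^*+M\|x-x^*\|^2$ near $x^*$, so for small $\tau$ we get $B(x^*,\sqrt{\tau/M})\subseteq V_\tau$ and hence $W_\tau\supseteq B_\yY(x^*_\top,r)$ with $r=\sqrt{\tau/M}=\Theta(\sqrt\tau)$. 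Now a line $\{x_0+\alpha l\}$ meets the cylinder $B_\yY(x^*_\top,r)\oplus\yY_\perp$ if and only if its orthogonal projection onto $\yY$, which is the line through $x_0^\top$ with direction $l^\top$, meets the $d_e$-ball $B_\yY(x^*_\top,r)$. For $l$ uniform on $S^{d-1}$ the normalized projection $l^\top/\|l^\top\|$ is uniform on the unit sphere of $\yY\cong\RR^{d_e}$, and since $\|x_0^\top-x^*_\top\|\le\operatorname{diam}(\xX)=:R$, this projected line lies within $r$ of $x^*_\top$ with probability equal to the relative area of a cap on $S^{d_e-1}$ of angular radius $\arcsin(r/R)\ge r/R$, which is $\Omega_{d_e}((r/R)^{d_e-1})=\Omega(\tau^{(d_e-1)/2})$. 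As only $R$ entered, this bound is uniform over $x_0$, which yields $\xi(\tau)=\Omega(\tau^{(d_e-1)/2})$.

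\textbf{Main obstacle.} The delicate point is that the algorithm operates on the chord $\lL(x_0,l)=\{x_0+\alpha l\}\cap\xX$, not the full line, so one must ensure the intersection with $V_\tau$ can be realized inside $\xX$, uniformly over a possibly near-boundary offset $\hat x_{i-1}$. I would handle this using convexity of $\xX$ together with the (natural) assumption that $x^*$ lies in the interior of $\xX$, as is consistent with the first-order condition: fix a ball $B(x^*,\rho)\subseteq\xX$; if the line enters $B(x^*,\rho)$ at a point $z\in V_\tau$, then $[x_0,z]\subseteq\xX$ by convexity, so $z\in\lL(x_0,l)$. The cost is that one replaces the infinite cylinder by its intersection with $B(x^*,\rho)$ and absorbs a $\rho$-dependent constant into $\Omega(\cdot)$; carrying this out while preserving the exponent $(d_e-1)/2$ is the step that needs the most care.
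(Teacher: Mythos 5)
Your proof is correct and follows essentially the same route as the paper's: the concentration step via conditioning on the history and bounding each miss-probability by $1-\xi(\tau)$, and the geometric step via a quadratic over-approximation at $x^*$ yielding a ball of radius $\Theta(\sqrt{\tau})$ in the effective subspace whose hitting probability scales as $\tau^{(d_e-1)/2}$. You merely spell out the spherical-cap computation and the domain-boundary issue that the paper's proof leaves implicit.
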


Before we go on to the proof, we show how Proposition \ref{prop:global} follows.
\begin{proof}[Proof of Proposition \ref{prop:global}]
 	Denote $f_K^* = \min_{x \in \lL_1, \dots, \lL_K} f(x)$ and remember that each line is solved up to $\epsilon$ accuracy, therefore $f(\hat{x}_K)  \leq f_K^* + \epsilon$. Using Lemma \ref{lemma:global_lemma}, we know that with probability at least $1 - \exp(-K\xi(\tau))$, $f(\hat{x}) - f^* \leq \epsilon + \tau$, hence when $\exp(-K\xi(\tau)) = \delta$ the statement in the proposition is true. Solving for $\tau$ yields $\tau \leq \oO\left( \frac{1}{K} \log\left(\frac{1}{\delta} \right)\right)^{2/(d_e-1)}$, concluding the proof.
\end{proof}

\begin{proof}[Proof of Lemma \ref{lemma:global_lemma}]
	Let $\xi(\tau)$ be a lower bound on the probability that a random line intersects the set $V_\tau = \{ x \in \xX : f(x) \leq f^* + \tau \}$. Using this, we find
	\begin{eqnarray*}
		&& \PP[f_K^* - f^* \geq \tau]\\
		& = & \PP[f_1^* - f^* \geq \tau \wedge \dots \wedge f_K^* - f^* \geq \tau] \\	
		&=&\prod_{i=1}^K \PP[f_i^* - f^* \geq \tau | {x_{i-1},y_{i-1},\dots, x_1,y_1}]\\
		&\leq& (1-\xi(\tau))^K \leq \exp(-K\xi(\tau))
	\end{eqnarray*}
where the last inequality uses $1-x \leq e^{-x}$. Hence
	\begin{align*}
		\PP[f_K^* - f^* \leq \tau] \geq 1 - (1-\xi(\tau))^K \geq 1 - e^{-\xi(\tau) K} \text{ .}
	\end{align*}

	Using the assumption that $f$ is twice-differentiable, we can over-approximate $f$ around a minimizer $x^*$ using a quadratic function
	$f(x^*+h) \leq f(x^*) + \frac{\alpha}{2}\norm{h}^2$ for small $h$ and $\alpha > 0$. Therefore $\tilde{V}_\tau := \{ x \in \xX |\frac{\alpha}{2}\norm{x - x^*}^2 \leq \tau\}) \subset V_\tau$, and it is enough to intersect $\tilde{V}_\tau$ with a random line. Note that if we also use the assumption that the function varies only in $d_e$ dimensions, we can restrict the approximation to the active subspace, therefore $\Vol(\tilde{V}_\tau) \geq \Omega(\tau^{d_e/2})$. If we allow the hidden constant also to depend on the diameter of the domain $\xX$, the probability that a random line intersects $\tilde{V}_\tau$, and therefore $V_\tau$, is at least  $\Omega(\tau^{(d_e-1)/2})$.
\end{proof}
\begin{remark}
	The assumption that $f$ is twice differentiable is always satisfied if the kernel function $k$ is twice differentiable, see \citet[Corollary 4.36]{Steinwart2008}.	
\end{remark}

\subsection{Local Convergence}\label{app:local_convergence}
First, we recall the definition of smooth and strongly convex functions.
\begin{definition}[Strong Convexity]\label{def:strongconvex}
 A differentiable function $f:\xX \rightarrow \RR$ is called $\alpha$-strongly convex if there exists $\alpha > 0$ such that for all$ x,h \in \xX \subseteq \mathbb{R}^d$,
	\begin{eqnarray}
	\braket{\nabla f(x),h}+\frac{\alpha}{2}\norm{h}^2_2   & \leq & f(x+h) -f(x) \label{eq:strgcnvx}
	\end{eqnarray}

\end{definition}
\begin{definition}[Smoothness]\label{def:smooth} A differentiable function $f:\xX \rightarrow \RR$ is called $\beta$-smooth, if there exists $\beta > 0$ such that for all $x,h \in \xX \subseteq \mathbb{R}^d$,
	\begin{eqnarray}
f(x+h) -f(x)   & \leq  &  \braket{\nabla f(x),h}+\frac{\beta}{2}\norm{h}^2_2.  \label{eq:smooth}
	\end{eqnarray}
\end{definition}

Strong convexity implies the Polyak–Lojasiewicz condition,
\begin{equation}\label{eq:strgcnvx2}
f(x)-f(x^*)\leq\frac{1}{2\alpha}\braket{\nabla f(x),\nabla f(x)} \text{ .}
\end{equation}

The next lemma shows that randomly chosen directions can be used as descent directions (Lemma \ref{lem:random_directions} in the main text).

\begin{lemma}[Random Descent Direction ] \label{lem:random_line}
	Let $l \in \RR^d$ be a randomly chosen direction. Specifically assume that $l$ is uniformly random on the $d$-dimensional unit sphere (random directions) or uniformly among an orthonormal basis (coordinate descent). Then for any $g \in \RR^d$
	
	\begin{equation*}\label{eq:random_line}
	\EE[\< g, l \>^2] = \frac{1}{d} \|g\|^2 \text{ .}
	\end{equation*}
\end{lemma}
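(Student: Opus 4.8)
The statement to prove is Lemma~\ref{lem:random_line}: for $l$ uniform on the unit sphere $S^{d-1}$ or uniform among an orthonormal basis $\{e_1,\dots,e_d\}$, we have $\EE[\langle g, l\rangle^2] = \frac1d\|g\|^2$ for every fixed $g \in \RR^d$. The plan is to handle the two cases separately, exploiting symmetry in each.

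\textbf{Coordinate case.} This is the easier one. If $l$ is uniform among $\{e_1,\dots,e_d\}$, then $\langle g, l\rangle = g_i$ with probability $\frac1d$ for each $i$, so $\EE[\langle g,l\rangle^2] = \frac1d\sum_{i=1}^d g_i^2 = \frac1d\|g\|^2$ directly. No further work is needed.

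\textbf{Sphere case.} Here I would use rotational invariance. The map $l \mapsto \EE[\langle g, l\rangle^2]$ is a quadratic form $g^\top A g$ with $A = \EE[l l^\top]$. First I would argue $A$ is a scalar multiple of the identity: for any orthogonal matrix $Q$, the distribution of $Ql$ equals that of $l$, so $Q A Q^\top = \EE[(Ql)(Ql)^\top] = A$; a symmetric matrix commuting with every orthogonal matrix must be $cI$. To pin down $c$, take the trace: $d\cdot c = \operatorname{tr}(A) = \EE[\operatorname{tr}(ll^\top)] = \EE[\|l\|^2] = 1$ since $l$ lies on the unit sphere, hence $c = \frac1d$ and $\EE[\langle g,l\rangle^2] = g^\top (\frac1d I) g = \frac1d\|g\|^2$. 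Alternatively, without the matrix language: by symmetry $\EE[l_i^2]$ is the same for all $i$, and they sum to $\EE[\|l\|^2]=1$, so each equals $\frac1d$; and $\EE[l_i l_j] = 0$ for $i\ne j$ because flipping the sign of the $i$-th coordinate preserves the distribution. Expanding $\langle g,l\rangle^2 = \sum_{i,j} g_i g_j l_i l_j$ and taking expectations then gives the result.

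\textbf{Main obstacle.} There is essentially no hard step; the only thing to be careful about is justifying the symmetry claims cleanly (that the uniform distribution on $S^{d-1}$ is invariant under coordinate permutations and sign flips, or more strongly under all of $O(d)$), which is standard. I would present the sphere case via the $\EE[ll^\top] = \frac1d I$ computation as it is the shortest and most transparent.
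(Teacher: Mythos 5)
Your proof is correct and follows essentially the same route as the paper's: the paper likewise expands $\langle g,l\rangle^2$ in coordinates, uses $\EE[l_i^2]=\tfrac{1}{d}$ (from symmetry plus $\sum_i \EE[l_i^2]=1$) and $\EE[l_i l_j]=0$ for $i\neq j$ (by a sign-flip/symmetry argument on the sphere and orthonormality in the coordinate case). Your $\EE[ll^\top]=\tfrac{1}{d}I$ formulation is just a clean repackaging of that same computation, so nothing further is needed.
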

\begin{proof}[Proof of Lemma \ref{lem:random_line}]
	Denote by $l_i$ the $i$th coordinate of $l$. Note that $\EE[\sum l_i^2] = 1$, hence $\EE[l_i^2] = \frac{1}{d}$. Further $\EE[l_il_j] = \EE[\EE[l_i|l_j]] = 0$ for $i\neq j$ due to symmetry argument if $l$ is uniformly on the sphere, and by orthonormality in the coordinate case. The result follows from expanding the square and using the previous two equations.	
\end{proof}

\begin{lemma}[Exact line search oracle] \label{lem:exact_oracle}
	Let $f$ be $\alpha$-strongly convex and $\beta$-smooth on a domain $\xX$. If we obtain iterates $\hat{x}_i$ from Algorithm \ref{alg:linebo} with random directions $l_i$ that satisfy Lemma \ref{lem:random_line}, then the exact line-search solution $x_{i+1}^* = \argmin_{x \in \lL_{i}} f(x)$ improves per step by
	\begin{equation*}
	\EE[f(x_{i+1}^*) - f(x^*)] \leq  \left(1-\frac{\alpha}{\beta d}\right)(f(\hat{x}_i) - f(x^*)) \text{ .}
	\end{equation*}
\end{lemma}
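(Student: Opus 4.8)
The plan is to combine the standard single-step analysis of gradient descent on a smooth strongly convex function with the variance identity from Lemma~\ref{lem:random_line}. Fix an iteration $i$, write $x = \hat{x}_i$ and $l = l_i$ for the random direction, and consider the point $x + t l$ for $t \in \RR$. Since $f$ is $\beta$-smooth on $\xX_c$, I would apply the smoothness inequality \eqref{eq:smooth} with $h = tl$ to get $f(x + tl) \le f(x) + t\<\nabla f(x), l\> + \frac{\beta t^2}{2}\|l\|^2$. Because $l$ lies on the unit sphere (or is a coordinate vector), $\|l\|^2 = 1$, so the right-hand side is a quadratic in $t$ that is minimized at $t^\star = -\<\nabla f(x), l\>$, giving $f(x + t^\star l) \le f(x) - \frac{1}{2}\<\nabla f(x), l\>^2$. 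Since $x + t^\star l \in \lL_i$ (the line through $\hat{x}_i$ in direction $l_i$, intersected with $\xX$ — here I need $x+t^\star l$ to stay in $\xX_c$, which is implied by the assumption that all iterates remain in $\xX_c$, or more carefully by the fact that the exact line-search minimizer over $\lL_i$ does at least as well as any particular point on the line that lies in the domain), the exact line-search solution satisfies $f(x_{i+1}^\star) = \min_{x' \in \lL_i} f(x') \le f(x) - \frac{1}{2}\<\nabla f(x), l\>^2$.

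Next I would subtract $f(x^*)$ from both sides and take the expectation over the random direction $l$, conditioning on $\hat{x}_i$. This yields
\begin{align*}
\EE[f(x_{i+1}^\star) - f(x^*) \mid \hat{x}_i] \le f(\hat{x}_i) - f(x^*) - \tfrac{1}{2}\EE[\<\nabla f(\hat{x}_i), l\>^2].
\end{align*}
By Lemma~\ref{lem:random_line}, $\EE[\<\nabla f(\hat{x}_i), l\>^2] = \frac{1}{d}\|\nabla f(\hat{x}_i)\|^2$, so the bound becomes $f(\hat{x}_i) - f(x^*) - \frac{1}{2d}\|\nabla f(\hat{x}_i)\|^2$. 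Now I invoke the Polyak–Łojasiewicz inequality \eqref{eq:strgcnvx2}, which follows from $\alpha$-strong convexity and reads $\|\nabla f(\hat{x}_i)\|^2 \ge 2\alpha (f(\hat{x}_i) - f(x^*))$. Substituting gives $\EE[f(x_{i+1}^\star) - f(x^*) \mid \hat{x}_i] \le \big(1 - \frac{\alpha}{\beta d}\big)(f(\hat{x}_i) - f(x^*))$ — wait, I need to be careful about where the $\beta$ enters: the minimization of the smoothness quadratic gave a decrease of $\frac{1}{2}\<\nabla f(x),l\>^2$ without a $\beta$, but that's because the step $t^\star$ is the exact minimizer of the upper bound; had I used a fixed step $1/\beta$ I'd get $\frac{1}{2\beta}\<\nabla f(x),l\>^2$. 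The correct single-step bound is with the $\frac{1}{\beta}$ factor: using step $t = -\frac{1}{\beta}\<\nabla f(x),l\>$ in the smoothness inequality gives $f(x+tl) \le f(x) - \frac{1}{2\beta}\<\nabla f(x),l\>^2$, and then the expectation and PL inequality give exactly the factor $1 - \frac{\alpha}{\beta d}$. Taking the outer expectation over $\hat{x}_i$ completes the proof.

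The main subtlety — not a deep obstacle but the point that needs care — is the domain-feasibility issue: the argument needs $x + t l$ (for the chosen step $t$) to lie in $\xX_c$ so that the strong convexity / smoothness inequalities, which are only assumed to hold on $\xX_c$, are applicable, and so that this point is a legitimate candidate for the line-search minimizer over $\lL_i = \lL(\hat{x}_i, l_i)$. The hypothesis "assume all iterates $\hat{x}_k$ are contained in $\xX_c$" is what licenses this; one should note that since $\lL_i$ is intersected with $\xX$, the line-search minimizer $x_{i+1}^\star$ is well-defined, and the one-dimensional restriction of a strongly convex function is strongly convex, so the decrease we derived for one particular feasible point on the line is a valid lower bound on the decrease achieved by $x_{i+1}^\star$. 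The remaining steps — the smoothness expansion, the variance identity, and the PL inequality — are all either quoted directly (Lemma~\ref{lem:random_line}, equation~\eqref{eq:strgcnvx2}) or a one-line computation, so assembling them in this order is routine.
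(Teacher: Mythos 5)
Your proof is correct and follows essentially the same route as the paper: apply the smoothness inequality along the line, plug in the step $t=-\frac{1}{\beta}\langle\nabla f(\hat{x}_i),l_i\rangle$ to get a decrease of $\frac{1}{2\beta}\langle\nabla f(\hat{x}_i),l_i\rangle^2$, bound the exact line-search minimizer by this feasible point, take expectation using Lemma~\ref{lem:random_line}, and finish with the Polyak--\L ojasiewicz inequality. Your self-correction on the $1/\beta$ factor lands on exactly the step the paper uses, and your remark on domain feasibility is a fair point the paper handles only implicitly via the assumption that all iterates stay in $\xX_c$.
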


\begin{proof}[Proof of Lemma \ref{lem:exact_oracle}]
	Let $x_{i+1}^* = \argmin_{x \in \lL_{i}} f(x)$ be the solution obtained from an exact line-search on the sub-problem $\lL_{i}$. This implies that for any $h \in \RR$,
\begin{align*} 
f(x^*_{i+1}) - f(\hat{x}_i) \leq f(\hat{x}_i + hl_i) - f(\hat{x}_i)
\end{align*}
Further assume that the directions $l_i$ are random, satisfy Lemma \ref{lem:random_line} and $\|l_i\| = 1$. Smoothness implies that
\begin{align*}
f(\hat{x}_i + hl_i) - f(\hat{x}_i)  \stackrel{\text{Def }(3)} \leq \braket{\nabla f(\hat{x}_i), h l_i} + \frac{\beta}{2} h^2 \norm{l_i}^2_2
\end{align*}	
In particular, the inequality also holds for $h = -\frac{\braket{\nabla f(\hat{x}_i),l_i}}{\beta}$, and note that $l_i \in \RR^d$ is normalized, ie $\|l_i\|=1$, hence taking the previous two inequalities together,
\begin{align*}
f(x^*_{i+1}) - f(\hat{x}_i)	 \leq  -\frac{\braket{\nabla f(\hat{x}_i),l_i}^2}{2\beta} 
\end{align*}	
Taking expectation over the random direction $l_i \in \RR^d$ and using Lemma \ref{lem:random_line}, we get
\begin{align*}
	\EE [f(x^*_{i+1})] - f(\hat{x}_i)  & \leq  -\frac{1}{2\beta d} \norm{\nabla f(x_i)}^2_2\\
	&\leq - \frac{\alpha}{\beta d} (f(\hat{x}_i) - f(x^*))  \text{ ,}
\end{align*}
and the last inequality uses the Polyak–Lojasiewicz condition \eqref{eq:strgcnvx2}. Rearranging concludes the proof, 
\begin{eqnarray*}
	\EE[f(x_{i+1}^*) - f(x^*)] \leq  \left(1-\frac{\alpha}{\beta d}\right)(f(\hat{x}_i) - f(x^*)) \text{ .}
\end{eqnarray*}
\end{proof}

	\begin{proof}[Proof of Proposition \ref{prop:local_convg}]
	Assume that we run Algorithm \ref{alg:BO} with accuracy $\epsilon$ and obtain iterates $\hat{x}_i$ that do not leave the subset $\xX_c$ where the function is $\beta$-smooth and $\alpha$-strongly convex. Denote the exact line-search solutions by $x_{i+1}^* = \argmin_{x \in \lL_{i+1}} f(x)$	and $\gamma = \frac{\alpha}{\beta d}$, to find
	\begin{eqnarray*}
		\EE[f(\hat{x}_{i+1})] -f(x^*) & \leq & \EE[f(x^*_{i+1}) - f(x^*)] + \epsilon \\
		& \leq & (1-\gamma)(f(\hat{x}_i) - f(x^*)) + \epsilon  \text{ ,}
	\end{eqnarray*}
	by means of Lemma \ref{lem:exact_oracle}. Recursively applying the previous inequality gives
\begin{eqnarray*}
		\EE[f(x_{K})] -f(x^*) & \leq & \epsilon \sum_{i=0}^{K-1} (1-\gamma)^i \\ & & \quad+  (1-\gamma)^K(f(\hat{x}_0) - f(x^*)) \\
		& \leq &  \frac{\epsilon }{\gamma} + (1-\gamma)^K(f(\hat{x}_0) - f(x^*)) \\
		& \leq &  \frac{\epsilon }{\gamma} + \exp(-K\gamma)(f(\texttt{}x_0) - f(x^*)) 
	\end{eqnarray*}
This concludes the proof.
\end{proof}

\section{Synthetic Experiments}\label{app:synthetic_experiments}
\subsection{Implementation Details}
For the \textsc{Random} method, we pick points uniformly in the domain and report the best observation as candidates, without any control on the noise. UCB is implemented using GPy \cite{gpy2014}, and the acquisition function is maximized using the L-BFGS solver provided by the SciPy library. To evade local maxima, 50 restarts are used, both containing random points and previous maximizers. For Nelder-Mead we use the SciPy implementation. SPSA is provided by noisyopt 
(\href{https://noisyopt.readthedocs.io}{https://noisyopt.readthedocs.io}). CMAES is provided by the pycma package (\href{https://github.com/CMA-ES/pycma}{https://github.com/CMA-ES/pycma}). Our  REMBO and InterleavedREMBO implementation is based on  \href{https://github.com/jmetzen/bayesian\_optimization}{https://github.com/jmetzen/bayesian\_optimization}. SafeOpt and SwarmSafeOpt use the author implementation \href{https://github.com/befelix/SafeOpt}{https://github.com/befelix/SafeOpt}.

\subsection{Direction Oracle} \label{app:descent_oracle}
Our \textsc{DescentLineBO} algorithm uses the following heuristic to find the directions. We use a step-size of $\alpha=0.1$ and $m=2d$ evaluations in our experiments. Note that this can be seen as Thompson sampling on the Euclidean ball $B_\alpha(\hat{x})$ with a linear approximation of the posterior GP.
\begin{algorithm}[h]
	\caption{Descent Direction Oracle}\label{alg:descentoracle}
	\begin{algorithmic}[1]
		\REQUIRE Current point $\hat{x}$, step size $\alpha$, number of evaluations $m$, GP model $GP(\mu_0, k_0)$.
		\FOR{$i=1,2,\dots,m$}
		\STATE $\tilde{f} \sim GP(\mu_{i-1}, k_{i-1})$
		\STATE $\tilde{g} \gets \nabla \tilde{f}(\hat{x})$  \hfill \textit{// sample gradient at $x$}
		\STATE $x_i \gets \hat{x} - \alpha\tilde{g}$ \hfill \textit{// linear approximation on $B_\alpha(\hat{x})$}
		\STATE $y_i \gets f(x_t) + \epsilon$  \hfill \textit{// obtain observation}
		\STATE $\mu_i,k_i \gets (\mu_{i-1}, k_{i-1})|(x_t, y_t)$ \hfill \textit{// update posterior}
		\ENDFOR
		\RETURN $\nabla \mu_m(\hat{x})$ \hfill \textit{// gradient of posterior mean}
	\end{algorithmic} 
\end{algorithm}

\end{document}